\newtheorem{definition}{Definition}
\newtheorem{assumption}{Assumption}
\newtheorem{lemma}{Lemma}
\newtheorem{theorem}{Theorem}
\newcommand*{\dif}{\mathop{}\!\mathrm{d}}
\title{Continuous Q-Score Matching: Diffusion Guided Reinforcement Learning for Continuous-Time Control}
\author{Chengxiu Hua\textsuperscript{1}, Jiawen Gu\textsuperscript{1*}, Yushun Tang\textsuperscript{1,2*}\\
{\normalsize{\textsuperscript{1}Southern University of Science and Technology, \textsuperscript{2}Huawei Technologies Co., Ltd.}}\\
\texttt{\small{12331005@mail.sustech.edu.cn, jwgu.hku@outlook.com, tangys2022@mail.sustech.edu.cn}}
}
\begin{document}


\maketitle
\renewcommand{\thefootnote}{\fnsymbol{footnote}}
\footnotetext[1]{~Corresponding authors.}
\renewcommand{\thefootnote}{\arabic{footnote}}

\begin{abstract}
  Reinforcement learning (RL) has achieved significant success across a wide range of domains, however, most existing methods are formulated in discrete time. In this work, we introduce a novel RL method for continuous-time control, where stochastic differential equations govern state-action dynamics. Departing from traditional value function-based approaches, our key contribution is the characterization of continuous-time Q-functions via a martingale condition and the linking of diffusion policy scores to the action gradient of a learned continuous Q-function by the dynamic programming principle. This insight motivates Continuous Q-Score Matching (CQSM), a score-based policy improvement algorithm. Notably, our method addresses a long-standing challenge in continuous-time RL: preserving the action-evaluation capability of Q-functions without relying on time discretization. We further provide theoretical closed-form solutions for linear-quadratic (LQ) control problems within our framework. Numerical results in simulated environments demonstrate the effectiveness of our proposed method and compare it to popular baselines. 
\end{abstract}

\section{Introduction}
RL has achieved substantial success across a wide range of domains over the past decade \cite{sutton1998reinforcement}. Most existing approaches adopt a discrete-time formulation, typically modeled as a Markov Decision Process (MDP) \cite{puterman2014markov,haarnoja2018soft}, where agents interact with the environment at fixed time intervals. However, many real-world systems—such as autonomous driving in dynamic traffic conditions \cite{tammewar2023improving}, robotic manipulation \cite{pearce2023imitating}, and high-frequency algorithmic trading \cite{kumar2023deep}—exhibit continuous, fine-grained dynamics that are inadequately captured by discrete-time models. These applications naturally motivate the need for continuous-time reinforcement learning (CT-RL) frameworks that more faithfully represent the temporal structure of decision-making. Recent works on CT-RL have explored stochastic modeling using stochastic differential equations (SDEs) \cite{gao2024reward,ishfaq2025langevin}, entropy-regularized exploration techniques \cite{guo2022entropy}, and model-free learning methods for diffusion generative models fine-tuning \cite{zhao2025score,han2024stochastic} and financial applications \cite{huang2022achieving,dai2023learning}.

Despite these advances, value-based methods like Q-learning \cite{watkins1989learning}—a cornerstone of discrete-time RL—remain challenging to adapt to the continuous-time setting. Traditional Q-learning algorithms (e.g., SARSA \cite{sutton1998reinforcement}, DQN \cite{mnih2013playing}) rely on estimating state-action value functions via temporal-difference (TD) learning and have demonstrated strong performance in discrete action spaces. There is a line of work on discretizing continuous action spaces to apply Q-learning in high-dimensional continuous control settings \cite{tavakoli2020learning,seyde2022solving,ireland2024revalued}. Discretizing continuous actions is a common approach to extend Q-learning, but it often struggles with scalability in high-dimensional spaces and relies on discrete-time assumptions.
However, extending Q-learning to continuous action spaces introduces major challenges. Early work \cite{gu2016continuous} proposed neural network-based formulations for continuous Q-learning, but later studies \cite{duan2016benchmarking} reported severe performance drops in high-dimensional settings due to the curse of dimensionality. Moreover, when Q-learning is directly extended to continuous time, the Q-function tends to collapse into an action-independent value function \cite{tallec2019making}, losing its ability to distinguish between actions—a critical property for decision-making. In \cite{jia2023qlearning}, they show that the discrete Q-learning algorithm is noisier and slower in convergence speed compared with their proposed continuous PG and little q learning algorithms.

To bridge this gap, recent work has explored Q-function-based frameworks that preserve action dependence in continuous time. For example, \cite{gao2022state} replaced the Q-function with a generalized Hamiltonian, while \cite{jia2023qlearning} proposed little q-learning, a time-discretization-free method based on first-order Q-function approximations, which achieved faster convergence than discrete-time soft Q-learning (SARSA). Score-matching-based methods \cite{psenka2024learning} have also emerged as promising tools for learning diffusion policies, though they still rely on time discretization. Other approaches, such as \cite{kim2021hamilton}, incorporate the action as a state variable by constraining the action process to be absolutely continuous with bounded growth. However, these methods are limited to deterministic dynamics and require upfront discretization of the continuous-time problem. These constraints underscore a key open challenge: how to design a principled and scalable Q-learning framework for stochastic, continuous-time environments that maintains the action characteristics of the Q-function. We address this by introducing a novel Q-function formulation and a corresponding algorithm, Continuous Q-Score Matching (CQSM), which operates directly in continuous time and supports stochastic dynamics.

\paragraph{Major contributions.} This work makes the following key contributions:\\
(1) \textbf{Continuous-Time Q-Function Characterization.} We derive a Bellman equation (also known as the Feynman-Kac formula) for continuous-time Q-functions. This bridges discrete Q-learning with continuous control theory. We rigorously characterize Q-functions for a given score function using a martingale condition defined over an enlarged filtration that incorporates both state and action noise. Based on this foundation, we propose new algorithms for direct Q-function learning, analogous to policy evaluation and policy gradient methods in \cite{jia2022policy,jia2022policygradient}.\\
(2) \textbf{Score Improvement Theorem and CQSM Algorithm.} We establish a score improvement theorem by the dynamic programming principle that enables principled policy updates in continuous time. By coupling the Q-function with the score function of a diffusion policy, we develop a model-free, actor-critic-style algorithm: Continuous Q-Score Matching (CQSM). This algorithm facilitates efficient policy improvement using only the denoising score function.\\
(3) \textbf{Analytical Validation via LQ Control.} We resolve the LQ problem under measurable scores, demonstrating the theoretical soundness of our framework. LQ problems are fundamental in control theory, as they serve as tractable approximations to more complex nonlinear systems, with practical relevance in areas such as algorithmic trading \cite{cartea2018algorithmic} and resource management \cite{graber2016linear}. Using analytical solutions, we compare CQSM to policy gradient and little q-learning methods, which are the action-independent value-based RL models, highlighting both its theoretical guarantees and numerical advantages.

In Section \ref{sec:related}, we review related work relevant to our method.
Section \ref{formulation} introduces the continuous-time reinforcement learning formulation using stochastic differential equations and presents key preliminary results. In Section \ref{mainbody}, we develop the Q-learning theory in continuous time, establishing martingale characterizations. We further extend the analysis to the infinite-horizon setting and prove the score improvement theorem. Section \ref{experiment} presents numerical evaluations on LQ control tasks, comparing CQSM with policy gradient and little q-learning methods. Finally, Section \ref{conclu} concludes with a summary and discussion of future directions.

\section{Related Works} 
\label{sec:related}
In this section, we review existing work across four areas related to our method: stochastic optimal control, continuous reinforcement learning, diffusion Q-learning, and behavior cloning.

\paragraph{Stochastic Optimal Control.} Our approach builds on classical stochastic optimal control theory \cite{yong1999stochastic}, particularly through the development of the Hamilton–Jacobi–Bellman (HJB) equation for continuous-time Q-functions. While traditional stochastic control frameworks often assume full knowledge of the system dynamics \cite{chen1998stochastic}, we instead assume a model for the dynamics of state-action pairs. This shift motivates new theoretical developments that underpin our Q-learning framework.

\paragraph{Continuous Reinforcement Learning.} 
The formulation of CT-RL in stochastic settings—where state evolution follows a stochastic differential equation (SDE)—dates back to \cite{munos1997reinforcement,doya2000reinforcement}, though early work lacked data-driven learning mechanisms. Recent advances have introduced more practical formulations. For instance, \cite{wang2020reinforcement} proposed an exploratory control framework for continuous RL, while \cite{wang2020continuousmv}, \cite{jia2022policy}, and \cite{jia2022policygradient} extended this line of work to mean-variance objectives, policy evaluation, and policy gradients, respectively. \cite{jia2023qlearning} further introduced the notion of a little q-value, leading to a continuous analogue of Q-learning. Additional developments include mean-field RL with continuous dynamics \cite{leahy2022convergence}, jump-diffusion extensions \cite{guo2023reinforcement}, and infinite-horizon variants of TRPO and PPO \cite{zhao2023policy}. Building on these foundations, we advance the study of continuous-time Q-learning under diffusion policies.

\paragraph{Diffusion Q-Learning.} 
Diffusion Q-learning \cite{wang2022diffusion} integrates diffusion models with Q-learning by using Q-values as training objectives and backpropagates through the diffusion model. More recently, \cite{ding2024diffusion} proposed a model-free online RL method based on diffusion policies. \cite{psenka2024learning} further established a connection between diffusion-based policies and the Q-function by relating the policy score to the action gradient of the Q-function. 
Building on this line of work, \cite{wang2024diffusion} employed entropy estimation to balance exploration and exploitation in diffusion policies, improving the performance of the policy. In parallel, \cite{maefficient} generalized diffusion model training by reweighting the conventional denoising score matching loss, leading to two efficient algorithms for training diffusion policies in online RL without requiring samples from optimal policies.
However, their approach relies on time discretization and requires injecting noise into the final action of the diffusion chain. In contrast, our work preserves the full action-evaluation capability of the Q-function in continuous time, without relying on any discretization in either time or action space. Our method is grounded in a martingale-based formulation of the HJB equation, which provides a principled theoretical foundation for continuous-time Q-learning.

\paragraph{Behavior Cloning.} Behavior cloning focuses on imitating expert trajectories without access to reward signals. Diffusion models are particularly well-suited for this task due to their generative flexibility and natural alignment with score-matching objectives. Recent works \cite{janner2022planning, reuss2023goal} have applied diffusion models to behavior cloning by framing policy learning as a distribution-matching problem over expert data. These approaches inspired our incorporation of score-matching terms into the objective. However, our framework goes beyond imitation, enabling policy improvement through Q-function learning in continuous time.

\section{Formulation and Preliminaries}\label{formulation}
In this section, we introduce the continuous-time RL formulation using stochastic differential equations and present key preliminary results.
\paragraph{Notation.} We introduce the non-standard notation used throughout the main text and appendix.
For a vector $x$, denote by $\|x\|_2$ the Euclidean norm of $x$. For a function $f$ on an Euclidean space, $\nabla f$ (resp. $\nabla^2 f$) denotes the gradient (resp. the Hessian) of $f$.  The Kullback–Leibler (KL) divergence of two positive density functions $f, g$ is defined as $D_{KL}(f||g):=\int_{A}\log \frac{f(a)}{g(a)} f(a)\dif a$. Define an operator $\mathcal{L}: C^{2,2}(\mathbb{R}^n\times \mathbb{R}^d)\cap  C(\mathbb{R}^n\times \mathbb{R}^d)\to C(\mathbb{R})$ \cite{yong1999stochastic} associated with the diffusion process as:
\begin{align*}
	\mathcal{L}\varphi(x,a) :=  \nabla_x \varphi(x,a)^{\top} b_X+ \nabla_a \varphi(x,a)^{\top}\Psi+\frac{1}{2}\text{tr}\left(\sigma_X \sigma_X^{\top}\nabla_x^2 \varphi(x,a)\right)+\frac{1}{2}\text{tr}\left(\sigma_a \sigma_a^{\top}\nabla_a^2 \varphi(x,a)\right).
\end{align*}
In both the main text and the proofs, we refer frequently to the \textit{score} of the action distribution, denoted by  $\Psi$. This vector field defines the temporal evolution of actions and serves as a proxy for the true score $\nabla_a \log \pi(a|x)$ where $\pi$ refers to the action distribution.

\paragraph{Continuous RL.} Let $d, n$ be positive integers, $T>0$. We denote the state as $X_t \in \mathbb{R}^n$ and the action as $a_t \in \mathbb{R}^d$ with $t \in [0, T]$. We consider the following stochastic, continuous-time setting for state and action dynamics: 
\begin{equation}\label{difupro}
\dif X_t=b_X(t, X_t,a_t)\dif t+\sigma_X(t, X_t,a_t) \dif B_t^X,
\dif a_t=\Psi(t, X_t,a_t) \dif t+\sigma_a(t, X_t,a_t) \dif B_t^a,
\end{equation}
where $\Psi:[0,T] \times\mathbb{R}^n\times\mathbb{R}^d\to\mathbb{R}^d$ corresponds to the score of our policy, which serves as the primary optimization variable in this setting, $b_X, b_a:[0,T] \times\mathbb{R}^n\times\mathbb{R}^d\to\mathbb{R}^n$ corresponds to the continuous state and action dynamics, and $\sigma_X,\sigma_a $ are functions from $[0,T] \times\mathbb{R}^n\times \mathbb{R}^d $ to positive semidefinite matrices in $\mathbb{R}^{n\times n}$ and $\mathbb{R}^{d\times d}$ respectively. The processes are driven by two independent Brownian motions:  $B^X=\{B^X_s, s\geq 0\}$ and $B^a=\{B^a_s, s\geq 0\}$. All processes are defined on a filtered probability space  $(\Omega, \mathcal{F}, \mathbb{P}; \{\mathcal{F}_s\}_{s\geq 0})$ where $ \{\mathcal{F}_s\}_{s\geq 0}$ is the natural filtration generated by a standard $n$-dimensional Brownian motion $B^X$ and a standard $d$-dimensional Brownian motion $B^a$.  
The (continuous-time) Q-function under any given $\Psi$ is defined as
\begin{equation}
    Q(t, x,a; \Psi)=\mathbb{E}^{\mathbb{P}}\left[\int_t^{T}  \left[r(s,X_s,a_s)-\frac{1}{2}\lambda\|\Psi(s,X_s,a_s)\|_2^2\right]\dif s+h(X_T, a_T)\bigg|X_t=x, a_t =a\right],
\end{equation}
where $\mathbb{E}^{\mathbb{P}}$ is the expectation with respect to both Brownian motions $B_t^X$ and $B_t^a$,   $r: [0,T] \times\mathbb{R}^n\times\mathbb{R}^d\to\mathbb{R}$ and $h: \mathbb{R}^n\times \mathbb{R}^d\to\mathbb{R}$ are running and lump-sum reward function, respectively, and $\lambda>0$ is the regularization coefficient governing the cost of large score magnitudes.  The goal is to find an optimal score function $\Psi^* \in \Pi$ where $\Pi$ denotes the set of admissible diffusion scores, such that the optimal Q-function 
\begin{equation}
    Q(t,x,a)= \sup_{\Psi \in \Pi} \mathbb{E}^{\mathbb{P}}\left[\int_t^{T}  \left[r(s,X_s,a_s)-\frac{1}{2}\lambda\|\Psi(s,X_s,a_s)\|_2^2\right]\dif s+h(X_T, a_T)\bigg|X_t=x, a_t =a\right]
\end{equation}
\begin{equation}\label{optQ}
    Q(t,x,a)= \sup_{\Psi \in \Pi}  Q(t,x,a; \Psi).
\end{equation}
We now give a precise definition of the admissible score set $\Pi$.
\begin{definition}
A score $\Psi$ is called admissible if\\
(i) $a:\mathbb{R}_{+}\to \mathbb{R}^d$ is measurable;\\
(ii) $\Psi, \sigma_a$ are globally Lipschitz continuous in $(x, a)$, i.e., for $\varphi\in \{\Psi, \sigma_a\}$, there exists a constant $C > 0$ such that
\begin{align*}
    \|\varphi(t,x,a)-\varphi(t,x',a')\|_2\leq C(\|x-x'\|_2+\|a-a'\|_2), \forall t\in[0,T], x,x'\in \mathbb{R}^n, a,a'\in \mathbb{R}^d;
\end{align*}
(iii) $\Psi, \sigma_a$ are linear growth in $(x, a)$, i.e., for $\varphi\in \{\Psi, \sigma_a\}$, there exists a constant $C > 0$ such that
\begin{align*}
    \|\varphi(t,x,a)\|_2\leq C(1+\|x\|_2+\|a\|_2), \forall t\in[0,T], x\in \mathbb{R}^n, a\in \mathbb{R}^d.
\end{align*}
\end{definition}
The conditions required in the above definition, while not necessarily the weakest ones, are to theoretically guarantee the well-posedness of the control problem (\ref{difupro})-(\ref{optQ}). Please see the Appendix \ref{wellposed} for more details.

The score matching term $\frac{1}{2}\lambda\|\Psi(X_s,a_s)\|_2^2$ in the objective can be interpreted from the following two perspectives:\\
\textbf{Quadratic Execution Costs.} Let $a_t$ denote the investor’s portfolio position. The score matching term captures the quadratic costs of execution trades of size $\Psi \dif t$, where $\lambda$ quantifies the level of transaction costs. This is consistent with execution cost models in portfolio optimization \cite{bensoussan2022dynamic};\\
\textbf{Policy Regularization via KL Divergence.} Suppose the diffusion coefficient  $\sigma_a$ is deterministic and $\Psi(t,X_t, a_t)=b_a(t,X_t, a_t)+u (t,X_t, a_t)\sigma_a(t)$ where $u$ represents a control. Under this setup, the score-matching cost admits an interpretation as a KL divergence between trajectory distributions.  Specifically, let $\pi^{\text{base}}(a_T|x, a)$ denote the distribution over terminal actions when $u = 0$ and let $\pi^{u}(a_T|x_t, a_t)$ denote the distribution under control $u$. By the Section 3 in \cite{domingo2024adjoint}), we have
\begin{equation}
    D_{KL}(\pi^{u}(a_T|x,a)|| \pi^{\text{base}}(a_T|x,a))=\mathbb{E}^{\mathbb{P}}\left[\int_t^{T}\frac{1}{2}\|u(s,X_s,a_s)\|_2^2\dif s\bigg|X_t=x, a_t =a\right].
\end{equation}
Setting $\lambda= \|\sigma_a(t)\|^2$, the original score-matching term aligns exactly with this KL regularization. The corresponding objective can then be interpreted as: 
\begin{equation}
    \begin{aligned}
        Q(t,x,a)=&\sup_{u\in \Pi}\mathbb{E}^{\mathbb{P}}\left[\int_t^{T} r(X_s,a_s)\dif s + h(X_T, a_T)\bigg|X_t=x, a_t =a\right]\\
    &-\lambda D_{KL}(\pi^{u}(a_T|x,a)|| \pi^{\text{base}}(a_T|x,a)).
    \end{aligned}
\end{equation}
Thus, the KL term encourages the optimal policy to remain close to the base dynamics, introducing a form of regularized policy improvement.

\section{Continuous Q-Score Matching Algorithm}\label{mainbody}
In this section, we develop a continuous-time Q-learning framework using a martingale characterization and the HJB equation. This offers an alternative approach to policy improvement. 
\subsection{Dynamic Programming and HJB Equation for Q-Function}
By the dynamic programming principle, the Q-function satisfies the following HJB equation: 
\begin{equation}\label{HJBQ}
    \sup_{\Psi\in \Pi} \left\{ \mathcal{L}Q(t,x,a)+\frac{\partial Q}{\partial t}(t,x,a)+ r(t,x, a)-\frac{1}{2}\lambda\|\Psi(t,x,a)\|_2^2\right\}=0.
\end{equation}
Note that the terms $\nabla_x Q \cdot b_X(t,x, a),\nabla_a Q \cdot b_a(t,x, a), \frac{1}{2}\text{tr}\left(\sigma_X \sigma_X^{\top}\nabla_x^2 Q\right), \frac{1}{2}\text{tr}\left(\sigma_a \sigma_a^{\top}\nabla_a^2 Q\right)$ and $ r(t,x, a)$ are all independent of $\Psi$. Hence, the supremum in (\ref{HJBQ}) is attained at $\Psi^*(t,x,a)=\lambda ^{-1}\nabla_a Q(t,x,a).$ Substituting this back into the HJB equation gives the following nonlinear partial differential equation characterizing the optimal Q-function:  
\begin{equation}
	\left\{\begin{aligned}
		&\frac{\partial Q}{\partial t}(t,x, a)+ r(t,x, a) +  \nabla_x Q(t,x, a)^{\top} \cdot b_X(t,x, a)+\frac{1}{2}\lambda^{-1} \| \nabla_a Q(t,x, a)\|_2^2 \\
		&+\frac{1}{2}\text{tr}\left(\sigma_X(t,x, a) \sigma_X(t,x, a)^{\top}\nabla_x^2 Q(t,x, a)\right)+\frac{1}{2}\text{tr}\left(\sigma_a(t,x, a) \sigma_a(t,x, a)^{\top}\nabla_a^2 Q(t,x, a)\right)   = 0,\\
		&Q(T, x,a)=h(x,a).
	\end{aligned}\right.
\end{equation}

We now focus on the optimal Q-function associated with the optimal score $\Psi^*$.
To avoid unduly technicalities, we assume throughout this paper that the Q-function $Q\in C^{1,2,2}([0,T)\times\mathbb{R}^n\times \mathbb{R}^d)\cap  C([0,T)\times\mathbb{R}^n\times \mathbb{R}^d)$ satisfies the polynomial growth condition in the joint state-action variable $z=(x,a)$. The following theorem establishes the key martingale characterization underpinning policy evaluation for diffusion-based policies.
\begin{theorem}\label{thmopQ}
	If $Q(\cdot, \cdot, \cdot; \Psi)$ is the Q-function associated with the score $\Psi$ if and only if it satisfies terminal condition $Q(T, x, a; \Psi)=h(x,a)$, and for all $(x,a)\in \mathbb{R}^n \times \mathbb{R}^d$, the following process
	\begin{equation}\label{scoremart}
	    M_s= Q(s,X_s, a_s; \Psi)+\int_t^s \left[r(u,X_u, a_u)-\frac{1}{2}\lambda\|\Psi(u,X_u,a_u)\|_2^2\right]\dif u
	\end{equation}
	is a $(\{\mathcal{F}_s\}_{s\geq 0}, \mathbb{P})$-martingale on $[t,T]$. Conversely, if there is a continuous Q-function $\tilde{Q}$ such that for all $(x,a)\in \mathbb{R}^n \times \mathbb{R}^d$, $\tilde{M}_s$ is a martingale, where
	\begin{equation}
	    \tilde{M}_s= \tilde{Q}(s,X_s, a_s; \Psi)+\int_t^s \left[r(u,X_u, a_u)-\frac{1}{2}\lambda\|\Psi(u,X_u,a_u)\|_2^2\right]\dif u,
	\end{equation}
	and $\tilde{Q}(T,x,a; \Psi)=h(x,a)$, then $\tilde{Q}=Q$ on $[0,T]\times\mathbb{R}^n \times \mathbb{R}^d.$ Furthermore,  the martingale property of  $M  \in L^2_{\mathcal{F}}([0,T])$is equivalent to the following orthogonality condition: 
        \begin{equation}
            \mathbb{E}^{\mathbb{P}}\int_0^T \xi_t\left[\dif Q(t, X_t, a_t; \Psi)+r(t,X_t, a_t)\dif t-\frac{1}{2}\lambda\|\Psi(t,X_t,a_t)\|_2^2 \dif t\right]=0,
        \end{equation}
	for any test process $\xi \in L^2_{\mathcal{F}}([0,T]; Q(\cdot, X.,a.; \Psi))$.
\end{theorem}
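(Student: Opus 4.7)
The plan is to prove the theorem in three logical steps, all of which exploit the Markov structure of the joint state-action process $(X_t, a_t)$ driven by the SDE system (\ref{difupro}) under the admissibility hypotheses on $\Psi$ and $\sigma_a$.

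\textbf{Forward direction.} Using the tower property of conditional expectation together with the Markov property of $(X, a)$, I would rewrite
\begin{equation*}
M_s = \mathbb{E}^{\mathbb{P}}\left[\int_t^T \left[r(u, X_u, a_u) - \tfrac{1}{2}\lambda \|\Psi(u, X_u, a_u)\|_2^2\right] \dif u + h(X_T, a_T) \,\bigg|\, \mathcal{F}_s\right],
\end{equation*}
which is manifestly an $\{\mathcal{F}_s\}$-martingale as the conditional expectation of an $\mathcal{F}_T$-measurable random variable with respect to a filtration.

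\textbf{Converse direction.} Suppose $\tilde{Q}$ is continuous, matches $h$ at the terminal time, and makes $\tilde{M}$ a martingale. Applying the identity $\mathbb{E}[\tilde{M}_T \mid \mathcal{F}_t] = \tilde{M}_t$ and substituting the terminal condition $\tilde{Q}(T, X_T, a_T; \Psi) = h(X_T, a_T)$ yields $\tilde{Q}(t, X_t, a_t; \Psi) = \mathbb{E}^{\mathbb{P}}[h(X_T, a_T) + \int_t^T (r - \tfrac{1}{2}\lambda\|\Psi\|_2^2)\, \dif u \mid \mathcal{F}_t]$, which by the Markov property equals $Q(t, X_t, a_t; \Psi)$ almost surely. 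Since we may take the initial condition $(X_t, a_t) = (x, a)$ to be arbitrary and both $\tilde{Q}$ and $Q$ are continuous, we conclude $\tilde{Q} \equiv Q$ on $[0,T] \times \mathbb{R}^n \times \mathbb{R}^d$.

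\textbf{Orthogonality characterization.} For the final equivalence, I would invoke the standard $L^2$ criterion: a square-integrable adapted process $M$ is a martingale if and only if $\mathbb{E}^{\mathbb{P}}\int_0^T \xi_t \, \dif M_t = 0$ for all admissible test processes $\xi$. One direction is immediate because $t \mapsto \int_0^t \xi_s \, \dif M_s$ is itself a martingale starting at $0$; the converse is obtained by specializing to simple $\xi_t = \mathbf{1}_{(s,t]}\mathbf{1}_A$ with $A \in \mathcal{F}_s$, which recovers the pointwise identity $\mathbb{E}[M_t - M_s \mid \mathcal{F}_s] = 0$. Unwinding the decomposition $\dif M_t = \dif Q(t, X_t, a_t; \Psi) + r(t, X_t, a_t)\, \dif t - \tfrac{1}{2}\lambda\|\Psi\|_2^2 \, \dif t$ then produces exactly the stated orthogonality condition.

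The hard part will be the technical side conditions that legitimize these manipulations: the admissibility hypotheses on $\Psi$ and $\sigma_a$ must deliver a unique strong solution of (\ref{difupro}) with polynomial moment bounds, and together with the polynomial growth assumption on $Q$ these bounds must ensure $M \in L^2_{\mathcal{F}}([0,T])$ so that Ito's isometry and the stochastic Fubini theorem apply, and so that the conditional expectations above are well-defined. These integrability checks (deferred to Appendix \ref{wellposed}) are the most delicate step, though largely mechanical given the standing assumptions.
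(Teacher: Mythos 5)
Your proposal is correct, and for the first two assertions (the forward martingale property via the tower/Markov property, and the uniqueness of $\tilde{Q}$ by conditioning $\tilde{M}_T$ on $\mathcal{F}_s$ and using the terminal condition) it coincides with the paper's argument essentially line for line. The only genuine divergence is in the orthogonality equivalence. You recover the martingale property by testing against simple indicator processes $\xi_u=\mathbf{1}_{(s,t']}\mathbf{1}_A$ with $A\in\mathcal{F}_s$, which collapses $\int_0^T\xi_u\,\dif M_u$ to $\mathbf{1}_A(M_{t'}-M_s)$ and yields $\mathbb{E}[M_{t'}-M_s\mid\mathcal{F}_s]=0$ directly; this is elementary and needs only that $M$ is an $L^2$ semimartingale so that such integrands are admissible. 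The paper instead first writes the It\^{o} decomposition $\dif M_t=A_t\,\dif t+C_t\,\dif B_t$ with $A_t=\mathcal{L}Q+r-\tfrac{1}{2}\lambda\|\Psi\|_2^2$ and $C_t=(\partial Q/\partial Z)^{\top}G$, then tests against $\xi_t=\mathrm{sgn}(A_t)\mathbf{1}_{[s,s']}(t)$ to conclude $\mathbb{E}\int_s^{s'}|A_t|\,\dif t=0$ and hence that the drift vanishes. Your route avoids invoking the $C^{1,2,2}$ regularity of $Q$ at this step, while the paper's route makes the link between the drift and the PDE explicit, which is what the downstream algorithms exploit; both are standard and both close the argument. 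Your flagged integrability checks (strong well-posedness, polynomial moment bounds, $M\in L^2_{\mathcal{F}}([0,T])$) are indeed where the paper leans on Assumption \ref{assumption} and the lemma in Appendix \ref{wellposed}, so nothing is missing.
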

Proof can be found in Appendix \ref{proofthm1}.
In summary, the martingality of the process defined in Equation (\ref{scoremart}) under a given score $\Psi$ is both necessary and sufficient for $Q$ to be the corresponding action value function.

\subsection{Score Evaluation of the Q-Function}\label{TDpolicye}
We now discuss how the HJB equation can be used to design a Q-learning algorithm for estimating $Q(x, a; \Psi)$ using sample trajectories. A number of algorithms can be developed based on two types of objectives: to minimize the martingale loss function or to satisfy the martingale orthogonality conditions. Following \cite{jia2022policy}, we leverage the martingale orthogonality condition, which states that for any $T>0$ and a suitable test process $\xi$,
\begin{equation}
    \mathbb{E}^{\mathbb{P}}\int_0^T \xi_t \left\{\dif Q(t,X_t, a_t; \Psi) +\left[r(t,X_t, a_t)-\frac{1}{2}\lambda\|\Psi(t,X_t,a_t)\|_2^2\right]\dif t\right\}=0.
\end{equation}
To approximate the Q-function, we consider a parameterized family $Q^{\theta}(\cdot, \cdot, \cdot; \Psi)$ where $\theta \in \Theta \subset \mathbb{R}^{L_{\theta}}$ (in principle, we need at least $L_{\theta}$ equations as our martingale orthogonality conditions in order to fully determine $\theta$.) and choose the special test function $\xi_t=\frac{\partial Q^{\theta}}{\partial \theta}(t,X_t,a_t; \Psi)$. Stochastic approximation \cite{robbins1951stochastic} leads to the online update:
\begin{equation}
\theta \gets \theta+\alpha_{\theta}  \frac{\partial Q^{\theta}}{\partial \theta}(t,X_{t},a_{t}; \Psi)\left(\dif Q(t, X_t, a_t; \Psi)+\left[r(t,X_t, a_t)-\frac{1}{2}\lambda\|\Psi(t,X_t,a_t)\|_2^2\right]\dif t \right)
\end{equation}
where $\alpha_{\theta}$ is a learning rate. This recovers the mean-squared TD error (MSTDE) method for policy evaluation in the discrete RL \cite{sutton1988learning}. We must, however, stress that testing against this specific function is theoretically not sufficient to guarantee the martingale condition. Additional discussions are provided in Appendix \ref{secalgoriithms}.
\subsection{Policy Optimization via Matching the Score to the Q-function}
Now we extend the analysis to the infinite-horizon setting. The dynamics of the state and action processes are given by:
\begin{equation}\label{difupro2}
\dif X_t=b_X(X_t,a_t)\dif t+\sigma_X(X_t,a_t) \dif B_t^X,
\dif a_t=\Psi(X_t,a_t) \dif t+\sigma_a(X_t,a_t) \dif B_t^a,    
\end{equation}
and the following discounted Q-function under any given $\Psi$:
\begin{equation}
    Q(x,a; \Psi)=\mathbb{E}^{\mathbb{P}}\left[\int_t^{+\infty} e^{-\beta s} \left[r(X_s,a_s)-\frac{1}{2}\lambda\|\Psi(X_s,a_s)\|_2^2\right]\dif s\bigg|X_t=x, a_t =a\right],
\end{equation}
where $\beta >0$ is a discount factor that measures the time-depreciation of the objective value (or the impatience level of the agent) and the optimal Q-function $Q(x,a)= \sup_{\Psi \in \Pi}  Q(x,a; \Psi).$

Note that in this case, the Q-function does not depend on time explicitly. As a result, there is no terminal condition, but instead we typically impose a growth condition $\mathbb{E}^{\mathbb{P}}[e^{-\beta t} Q(X_t, a_t; \Psi)]\to 0$ as $t\to \infty$. Again, using the dynamic programming principle, we have
\begin{equation}\label{infhjb}
     \sup_{\Psi\in \Pi}\left\{\mathcal{L} Q(x,a)-\beta Q(x,a) + r(x,a)-\frac{\lambda}{2}\|\Psi(x,a)\|^2_2\right\}=0.
\end{equation}

We now introduce an alternative method for updating policies based on Q-function estimates that avoids the use of policy gradients. The following result serves as a score improvement theorem, analogous to the classic policy improvement theorem in reinforcement learning.
\begin{theorem}\label{PIT}
Let $\Psi$ be any given score function and let the associated Q-function $Q(\cdot, \cdot; \Psi)\in C^{2,2}(\mathbb{R}^n\times \mathbb{R}^d)\cap C^0(\mathbb{R}^n\times \mathbb{R}^d)$. Suppose further that the score function $\Psi^{1}$ defined by $\Psi^{1}=\lambda^{-1} \nabla_a Q(x,a; \Psi)$ for some $\lambda>0$ is admissible. Then $Q (x,a; \Psi^{1})\geq Q(x,a;\Psi), (x,a)\in \mathbb{R}^n\times \mathbb{R}^d$. 
\end{theorem}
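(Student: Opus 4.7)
The plan is to compare $Q(x,a;\Psi)$ and $Q(x,a;\Psi^1)$ by evaluating the fixed function $Q(\cdot,\cdot;\Psi)$ along the state-action trajectory driven by the new score $\Psi^1$, and then invoke the pointwise maximization identity that defines $\Psi^1$. Concretely, Theorem~\ref{thmopQ} (in its infinite-horizon, discounted form, using the growth condition in place of the terminal condition) tells us that $Q(\cdot,\cdot;\Psi)$ solves the linear PDE
\begin{equation*}
\mathcal{L}^{\Psi}Q(x,a;\Psi)-\beta Q(x,a;\Psi)+r(x,a)-\tfrac{\lambda}{2}\|\Psi(x,a)\|_2^{2}=0,
\end{equation*}
where $\mathcal{L}^{\Psi}$ is the infinitesimal generator of the diffusion \eqref{difupro2} with drift $\Psi$ in the action coordinate. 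This is the analogue, for the non-optimal score $\Psi$, of the HJB identity \eqref{infhjb}.

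The key algebraic step is pointwise: for any vector field $\psi$,
\begin{equation*}
\mathcal{L}^{\psi}Q(x,a;\Psi)-\beta Q(x,a;\Psi)+r(x,a)-\tfrac{\lambda}{2}\|\psi\|_2^{2}
=\nabla_a Q(x,a;\Psi)\cdot\psi-\tfrac{\lambda}{2}\|\psi\|_2^{2}+(\text{terms independent of }\psi).
\end{equation*}
Since $\Psi$ itself makes the left-hand side vanish, choosing $\psi=\Psi^{1}=\lambda^{-1}\nabla_a Q(\cdot,\cdot;\Psi)$ produces the nonnegative surplus
\begin{equation*}
\tfrac{\lambda}{2}\bigl\|\Psi^{1}-\Psi\bigr\|_2^{2}\;\geq\;0,
\end{equation*}
obtained by completing the square in $\psi$. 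Hence
\begin{equation*}
\mathcal{L}^{\Psi^{1}}Q(x,a;\Psi)-\beta Q(x,a;\Psi)+r(x,a)-\tfrac{\lambda}{2}\|\Psi^{1}(x,a)\|_2^{2}\;\geq\;0.
\end{equation*}

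With this inequality in hand, I would apply It\^o's formula to $e^{-\beta s}Q(X_s,a_s;\Psi)$ along the dynamics \eqref{difupro2} driven by the admissible score $\Psi^{1}$, with initial condition $(X_0,a_0)=(x,a)$. Integrating from $0$ to a stopping time $\tau_N$ that localizes the stochastic integrals, taking expectations to kill the martingale part, and using the inequality above yields
\begin{equation*}
Q(x,a;\Psi)\;\leq\;\mathbb{E}^{\mathbb{P}}\!\left[e^{-\beta\tau_N}Q(X_{\tau_N},a_{\tau_N};\Psi)\right]+\mathbb{E}^{\mathbb{P}}\!\int_0^{\tau_N}\!e^{-\beta s}\Bigl[r(X_s,a_s)-\tfrac{\lambda}{2}\|\Psi^{1}(X_s,a_s)\|_2^{2}\Bigr]\dif s.
\end{equation*}
Sending $N\to\infty$ and then $\tau_N\to\infty$, the boundary term vanishes by the assumed growth condition $\mathbb{E}^{\mathbb{P}}[e^{-\beta t}Q(X_t,a_t;\Psi)]\to 0$, while the integral converges to $Q(x,a;\Psi^{1})$ by the definition of the discounted Q-function under $\Psi^{1}$. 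This gives the desired inequality $Q(x,a;\Psi)\leq Q(x,a;\Psi^{1})$.

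The main obstacle is the analytic justification of these limits: one needs the local martingale increments from the It\^o expansion to have zero expectation and the discounted boundary term to vanish. Given the polynomial growth assumption on $Q$ and the linear-growth/Lipschitz admissibility of both $\Psi$ and $\Psi^{1}$, standard SDE moment estimates on $(X_s,a_s)$ (bounding $\mathbb{E}\|X_s\|_2^{p}+\mathbb{E}\|a_s\|_2^{p}$ by $C(1+\|x\|_2^{p}+\|a\|_2^{p})e^{Cs}$) combined with the exponential discount $e^{-\beta s}$ make a localization-plus-dominated-convergence argument routine but technically the only delicate piece; the rest of the proof is the pointwise completion of squares outlined above.
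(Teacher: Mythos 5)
Your proposal is correct and follows essentially the same route as the paper's proof: establish the Feynman--Kac identity for $Q(\cdot,\cdot;\Psi)$, observe that $\Psi^{1}=\lambda^{-1}\nabla_a Q(\cdot,\cdot;\Psi)$ maximizes the Hamiltonian in the drift variable (your explicit completion of squares, yielding the surplus $\tfrac{\lambda}{2}\|\Psi^{1}-\Psi\|_2^{2}\geq 0$, is just a sharper form of the paper's $\sup_{\tilde\Psi}$ argument), then apply It\^o's formula to $e^{-\beta s}Q(X_s,a_s;\Psi)$ along the $\Psi^{1}$-driven dynamics with localization by stopping times and pass to the limit using the transversality condition. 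No gaps.
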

Proof can be found in Appendix \ref{proofthm2}.

\cite{psenka2024learning} similarly constructed a score function $\Psi^1$, but their approach only determines the direction of the policy update, without specifying the magnitude of the update vector. This result implies that iteratively updating the score function by aligning it with the action gradient of the Q-function leads to monotonic improvement in the Q-values. In other words, setting $\Psi \leftarrow \lambda^{-1}\nabla_a Q(x,a)$ guarantees an improvement in the resulting Q-function globally. Figure \ref{method} shows a visual description of Theorem \ref{PIT} and the implied policy update direction via CQSM. Building on this theoretical foundation, we now describe how to implement a sample-based update using a parameterized score function $\Psi^{v}$ with the parameter $v\in \mathbb{R}^{L_v}$. To match the direction of $\nabla_a Q(x,a)$, we define the update as: $v \in \arg\min \frac{1}{2}\|\Psi^v(x,a)-\lambda^{-1} \nabla_a Q(x,a)\|^2.$ This yields the CQSM as shown in Algorithm \ref{CQSMal}.
\begin{figure}[!tbp]
	\centering
	\includegraphics[width=0.8\textwidth]{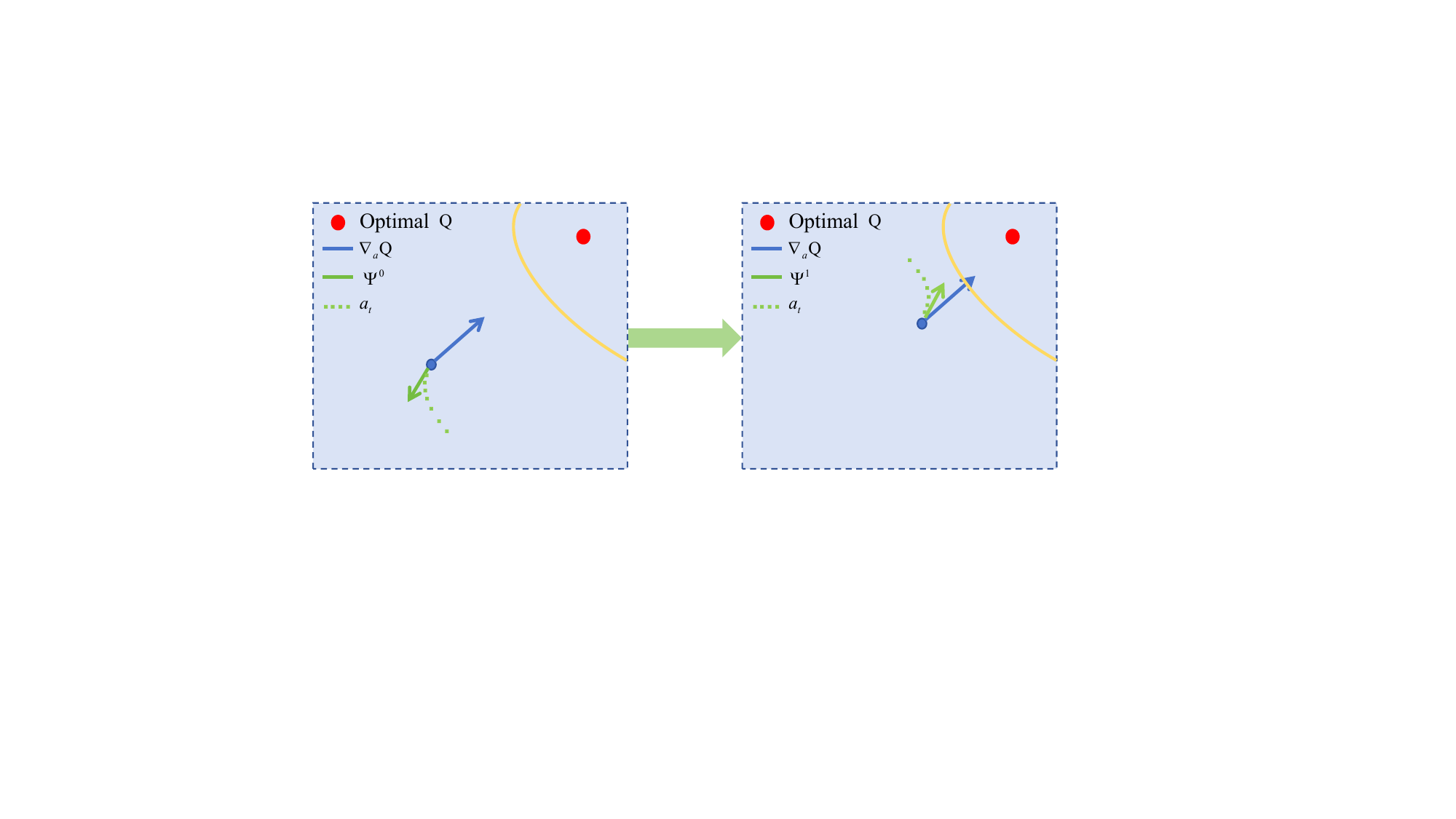}
	\caption{The left image shows a randomly initialized score function $\Psi^0$, and the right shows the updated score $\Psi^1$ after one step. If a discrepancy exists between the score $\Psi$ (green vector) and the action gradient $\nabla_aQ(x,a)$ (blue vector), then aligning $\Psi$ with $\nabla_a Q$ yields a strict improvement in Q-value at $(x, a)$.}  \label{method}
\end{figure}

\paragraph{Policy sampling.} Consider the dynamics of (\ref{difupro2}) for a fixed state and setting $\sigma_a(x,a)=\sqrt{2}I_d.$ Given certain conditions, under appropriate regularity conditions, the stationary distribution of the action $a_t$ as $t \to \infty$ for any fixed $x\in \mathbb{R}^n$, denoted $\pi(a|x)$: $\pi(a|x)\sim e^{\frac{1}{\lambda} Q(x,a)}.$ That is, the stationary action distribution corresponds to a Boltzmann distribution over actions: $\pi(a|x)=\frac{1}{Z} e^{\frac{1}{\lambda} Q(x,a)}$, where $Z=\int_{\mathbb{R}^d} e^{\frac{1}{\lambda} Q(s,a)}\dif a$. This gives rise to a soft optimal policy, where the action distribution is shaped by the Q-values, rather than relying on the soft Hamiltonian or auxiliary q-functions used in the soft actor-critic literature (e.g., \cite{jia2022policygradient, jia2023qlearning}). However, for general $\sigma_a(x,a)$, the stationary distribution $\pi(a|x)$ may not have a closed-form expression \cite{tang2022exploratory}. More details about action sampling can be found in Appendix E.

Here we highlight several hyperparameters: the trajectory truncation parameter (time horizon) $T$ (needs to be sufficiently large); the total sample size $N$ or the sampling interval $\Delta t$, with $N \cdot \Delta t = T$.  We define the observation times as $t_k := k \cdot \Delta t, k = 0, \ldots , N - 1$, at which data is collected from the simulated environment, denoted as $\text{Environment}_{\Delta t}$.
\begin{algorithm}[H]
\caption{Continuous Q-Score Matching (CQSM)}
\label{CQSMal}
\begin{algorithmic}[1]
\State \textbf{Inputs:} initial state $x_0$, time step $\Delta t$, initial learning rates $\alpha_{\theta}, \alpha_v$ and learning rate schedule function $l(\cdot)$ (a function of time), the action value function $Q^{\theta}(\cdot, \cdot)$, the score function $\Psi^{v}(\cdot, \cdot)$, the test function $\xi(x_{\cdot \land t}, a_{\cdot \land t})$, the initial parameter $\theta_0, v_0$, and regularization parameter $\lambda$. 
\State \textbf{Required:} Environment simulator $(x', r) = \text{Environment}_{\Delta t}(x, a)$.
\For{$k = 0, \dots, N - 1$}
    \State Sample action $a$ via iterative denoising: \( a^T \sim \mathcal{N}(0,I) \to a^0 \), using score $\Psi^v$: \( a \sim \pi(x) \)
    \State Simulate environment step: \( (x', r) = \text{Environment}_{\Delta t}(x, a) \). Update state: \( x_{t_{k+1}} \gets x' \).
    \State Sample new action $a' \sim \pi(x')$ via denoising with $\Psi^v$. Store \( a_{t_{k+1}} \gets a' \)
    \State Compute test function: \( \xi_{t_k} = \xi(x_{t_0:k}, a_{t_0:k}) \)
    
    \State Compute temporal difference:
    \[
    \delta = Q^\theta(x', a') - Q^\theta(x, a) + r(x,a)\Delta t - \frac{\lambda}{2} \|\Psi^v(x, a)\|^2 \Delta t - \beta Q^\theta(x, a)\Delta t
    \]
    
    \State Compute parameter updates:
    \[
    \Delta \theta = \xi_{t_k} \cdot \delta, \quad 
    \Delta v = \left( \frac{1}{\lambda} \nabla_a Q^\theta(x,a) - \Psi^v(x,a) \right) \frac{\partial \Psi^v}{\partial v}(x,a)
    \]

    \State Update parameters:
    \[
    \theta \gets \theta + l(k \Delta t) \alpha_{\theta} \Delta \theta, \quad
    v \gets v + l(k \Delta t) \alpha_{v} \Delta v
    \]

    \State Set \( x \gets x' \)
\EndFor
\end{algorithmic}
\end{algorithm}

\section{Experiments}\label{experiment}
In this section, we present numerical evaluations on LQ control tasks. Additional experimental results are provided in Appendix \ref{app:expe}. We compare the performance of our proposed CQSM algorithm against continuous time policy gradient \cite{jia2022policygradient} and continuous time little q-learning methods \cite{jia2023qlearning}. Below, we briefly review these baseline methods.
\begin{itemize}
    \item CT-RL policy gradient: Given an admissible policy, this method first performs policy evaluation to estimate the corresponding value function. It then computes the policy gradient as: $g(t,x;\phi)=\frac{\partial}{\partial \phi}J(t,x;\pi^{\phi})$ ($J$ is a value function). \cite{jia2022policygradient} transforms policy gradient into policy evaluation to develop a policy gradient algorithm.
    \item CT-RL q-learning: The little q-function is defined as the first-order derivative of the Q-function with respect to $\Delta t$. Policy improvement is achieved via $\pi^{\phi}(a|t,x)=\frac{\exp{\{\frac{1}{\lambda}q^{\phi}(t,x,a)\}}}{\int \exp{\{\frac{1}{\lambda}q^{\phi}(t,x,a)\}}\dif a},$ leading to a continuous-time q-learning theory.
\end{itemize}
\paragraph{Linear-Quadratic Stochastic Control.}
We now focus on the family of stochastic control problems with linear state dynamics
\begin{equation}\label{dynandre}
    b_X(x, a) = Ax + Ba \quad \text{and} \quad \sigma_X(x, a) = Cx + Da, \sigma_a(x, a) = \sqrt{2}, x, a \in \mathbb{R},
\end{equation}
where $A, B, C, D \in \mathbb{R}$ and the quadratic reward
\begin{equation}\label{qreward}
    r(x, a) = -\left(\frac{M}{2}x^2 + Rxa + \frac{N}{2}a^2+Px+P'a\right),
\end{equation}
where $M \geq 0, N > 0, R, P, P' \in \mathbb{R}$. 
If $D \neq 0$, then one smooth solution to the HJB equation
\begin{equation}
    \beta Q(x, a) - Q_xb(x, a) - \frac{1}{2\lambda}Q_a^2 - \frac{1}{2}\sigma_X^2Q_{xx} - \frac{1}{2}\sigma_a^2Q_{aa} - r(x, a) = 0,
\end{equation}
is given by $Q(x, a) = \frac{1}{2}k_0x^2 + k_1x + \frac{1}{2}k_2a^2 + k_3a + k_4xa + k_5$ where
\begin{equation}\label{kpara}
    \left\{\begin{array}{c}
		k_0=\frac{1}{\lambda(\beta-2A-C^2)}k_4^2 -\frac{M}{\beta-2A-C^2} \\
		k_1=\frac{1}{\lambda(\beta-A)}k_3 k_4 -\frac{P}{\beta-A}\\
		k_2=\frac{\beta}{2}\lambda-\lambda\sqrt{\frac{\beta^2}{4}+\frac{1}{\lambda}(N-2\varpi)}\\
		k_3=-\frac{BP+P'(\beta-A)}{(\beta-A)\left(\beta-\frac{B}{\lambda(\beta-A)}k_4-\frac{1}{\lambda}k_2\right)}\\
		k_4=\frac{-\lambda(\beta-2A-C^2)B\pm\sqrt{\lambda^2(\beta-2A-C^2)^2 B^2+D^2(D^2\lambda M+2\lambda(\beta-2A-C^2)\varpi)}}{D^2}\\
		k_5=\frac{2\lambda k_2+k_3^2}{2\lambda \beta}
	\end{array}
	\right..
\end{equation}

For the particular solution, we can verify that $k_2 <0$. To ensure $Q$ is concave, a property essential for verifying that this function indeed corresponds to the action-value function\footnote{the HJB equation has an additional quadratic solution, which, however, is convex.}, we impose the additional conditions $k_0<0, k_0 k_2-k_4^2>0.$
Next, we state one of the main results of this paper.
\begin{theorem}\label{LQcase}
    Suppose the dynamics and the reward function are given by (\ref{dynandre}) and (\ref{qreward}), respectively. Then, the Q-function is given by $Q(x, a) = \frac{1}{2}k_0x^2 + k_1x + \frac{1}{2}k_2a^2 + k_3a + k_4xa + k_5$ where $k_0,k_1,k_2,k_3,k_4,k_5$ are as in (\ref{kpara}). Furthermore, the optimal score function takes the form:
\begin{equation}
    \Psi^*(x,a)=\lambda^{-1}(k_2  a+k_3+k_4 x).
\end{equation}
\end{theorem}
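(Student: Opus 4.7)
My plan is to solve the stationary HJB equation directly via a quadratic ansatz, then read off the optimal score from its action-gradient. I would substitute
\[
Q(x,a) = \tfrac{1}{2}k_0 x^2 + k_1 x + \tfrac{1}{2}k_2 a^2 + k_3 a + k_4 x a + k_5
\]
into the HJB equation displayed immediately before the theorem, using the explicit forms $b_X = Ax+Ba$, $\sigma_X = Cx+Da$, $\sigma_a = \sqrt{2}$, and the quadratic reward in (\ref{qreward}). Since $Q_x = k_0 x + k_1 + k_4 a$ and $Q_a = k_2 a + k_3 + k_4 x$ are linear while $Q_{xx}, Q_{aa}$ are constant, every term of the HJB — including $Q_a^2/(2\lambda)$ and $\sigma_X^2 Q_{xx}/2$ — is a polynomial in $(x,a)$ of degree at most two. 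Demanding that it vanish identically gives six algebraic equations by matching the coefficients of $x^2$, $xa$, $a^2$, $x$, $a$, and $1$.

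I would then solve this system in a convenient order. The $a^2$ coefficient is a scalar quadratic in $k_2$ alone, yielding the displayed expression $k_2 = \tfrac{\beta}{2}\lambda - \lambda\sqrt{\tfrac{\beta^2}{4}+\tfrac{1}{\lambda}(N-2\varpi)}$; the $xa$ coefficient then becomes a quadratic in $k_4$ whose discriminant produces the closed form in (\ref{kpara}); the $x^2$ coefficient is linear in $k_0$ once $k_4$ is known; and the remaining $x$, $a$, constant equations are linear and decoupled, yielding $k_1, k_3, k_5$. Each quadratic step has two roots, and I would select the minus sign for $k_2$ (respectively the appropriate sign for $k_4$) enforced by the concavity conditions $k_0 < 0$, $k_2 < 0$, $k_0 k_2 - k_4^2 > 0$ stated just above the theorem, which exclude the spurious convex HJB solution mentioned in the footnote.

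For the score, I would invoke the pointwise supremum computation carried out in Section \ref{mainbody}: the supremum in the HJB equation (\ref{infhjb}) is attained at $\Psi^*(x,a) = \lambda^{-1}\nabla_a Q(x,a)$, and differentiating the quadratic ansatz gives $\nabla_a Q(x,a) = k_2 a + k_3 + k_4 x$, so $\Psi^*(x,a) = \lambda^{-1}(k_2 a + k_3 + k_4 x)$. This is affine in $(x,a)$ and therefore admissible in the sense of the definition in Section \ref{formulation}.

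The main obstacle is verification: solving the HJB equation only produces a candidate classical solution, and I must argue that it actually equals the supremum defining the value function. Here I would appeal to the martingale characterization of Theorem \ref{thmopQ} adapted to the infinite-horizon discounted setting, applying Itô's formula to $e^{-\beta t}Q(X_t,a_t)$ along the closed-loop dynamics induced by $\Psi^*$; the HJB identity cancels the drift up to the running reward, so the remaining stochastic integral is a true martingale and the representation formula for $Q$ follows, with the score-improvement result of Theorem \ref{PIT} confirming optimality over all admissible $\Psi$. The delicate point will be the transversality condition $\mathbb{E}^{\mathbb{P}}[e^{-\beta t} Q(X_t, a_t; \Psi^*)] \to 0$: under the affine closed-loop $\Psi^*$, the joint process $(X_t, a_t)$ is a linear SDE whose second moments grow exponentially at a rate determined by $A,B,C,D$ and the $k_i$'s, so I would need to show that the concavity/sign choices already embedded in (\ref{kpara}) guarantee this rate is strictly less than $\beta$, which is precisely the well-posedness content that pins down the correct roots.
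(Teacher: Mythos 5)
Your proposal follows essentially the same route as the paper's Appendix C: substitute the quadratic ansatz into the stationary HJB equation, match the coefficients of $x^2, xa, a^2, x, a, 1$ to get six algebraic equations, select roots via the concavity conditions, and read off $\Psi^* = \lambda^{-1}\nabla_a Q$ from the pointwise maximization of the Hamiltonian. One small correction: the $a^2$, $xa$, and $x^2$ equations are actually coupled (the $a^2$ coefficient involves $k_4 B$ and $k_0 D^2$, not $k_2$ alone), which the paper untangles by first expressing $k_0$ in terms of $k_4$ and introducing the auxiliary quantity $\varpi$ pinned down by the $xa$ equation; your closing discussion of the verification/transversality step is sound and in fact more explicit than the paper, which relegates it to the assumption $\beta > 2A + C^2$.
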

Additional details of Theorem \ref{LQcase} are provided in the Appendix \ref{LQexam}.

In our simulations, to ensure the stationarity of the controlled state process, we use the following model parameters: $A=-1, B=C=0, D=1, M=N=P'=2, R=P=1, \beta=1, \lambda=0.1$. We parameterize the Q-function as $Q^{\theta}=\frac{1}{2}\theta_0x^2 + \theta_1x + \theta_2 a^2 + \theta_3 a + \theta_4 xa + \theta_5$ and the corresponding score function as$\Psi^v(x,a)=-e^{v_1} a+v_2 x+v_3$. Using these parameterizations and the model setup, the optimal parameter values are computed as:
\begin{align*}
	\theta^*=&[-0.59047134,-0.23069812, -0.46141679, -0.35624157, -0.15119060, 0.17312350],\\
    v^*=&[1.52913155, -1.5119060, -3.5624157].
\end{align*}
\paragraph{Implementation Details.}
The learning rate is initialized as $\alpha_{\theta}=\alpha_v=0.01$ and decay according to $l(t)=\frac{1}{\max \{1, \sqrt{\log t}\}}$. To evaluate performance and stability, each experiment is repeated five times with different random seeds for sample generation. The parameter vector $\theta$ is initialized as zero and $v$ is initialized in the range $[0,1]$. The corresponding optimal values $\theta^*$ and $v^*$ are then used as baselines for comparison. The time cost for each experiment is $352$ seconds on a computer with Intel Core i5-10500 CPU and 32G Memory. 

\paragraph{Performance Results.} Each experiment is repeated ten times with different random seeds. Figure \ref{paraconver} illustrates the convergence behavior of the proposed CQSM algorithm for one realized trajectory with time step $\Delta t = 0.1$. Both the Q-function and score function parameters gradually approach their theoretical optima except $\theta_2, \theta_3,v_2$. Note that these parameters are closely tied to $\nabla_a Q$. Our method requires the estimation of both the Q-function $Q$ and its gradient $\nabla_a Q$. This dual estimation introduces additional variance and bias, potentially leading to inaccurate policy updates.

Furthermore, we compare the performance of our CQSM algorithm against two benchmark methods in terms of the running average reward obtained during the learning process. The other two algorithms are the PG-based algorithm proposed in (\cite{jia2022policygradient}, Algorithm 3) and the little q-learning algorithm presented in (\cite{jia2023qlearning}, Algorithm 4). Figure \ref{rewardcomp} presents the running average rewards and standard deviations for all three methods under three step sizes, $\Delta t = {0.01, 0.1, 1}$. Our proposed CQSM consistently outperforms the baselines in the early stages of training, achieving higher rewards more quickly than both PG and little q-learning. After a sufficient amount of time, all methods eventually stabilize to similar average reward levels.

\begin{figure}[htbp]
\centering
\subfigure[The path of learned $\theta_0, \theta_1, \theta_2$]{
\includegraphics[width=0.31\textwidth]{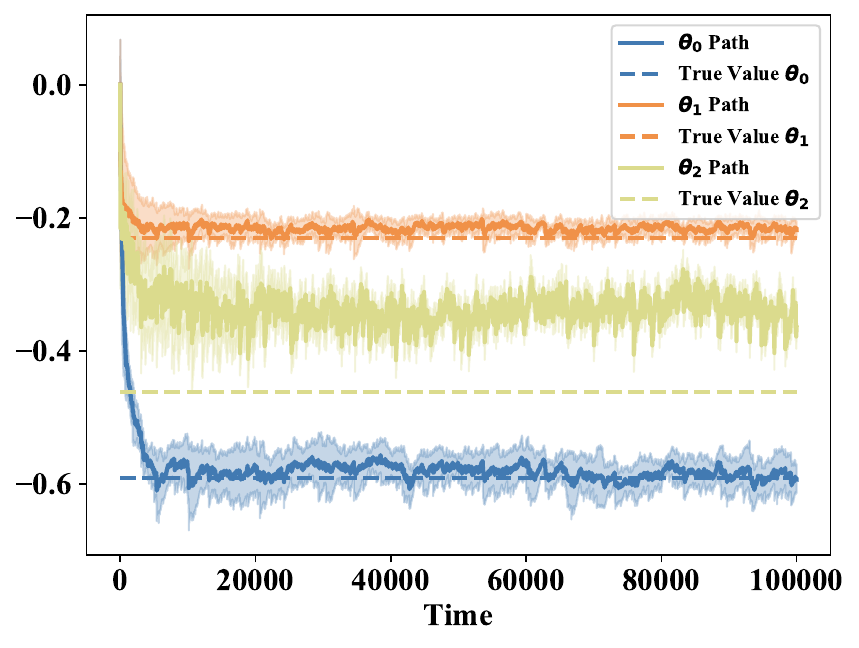}
}
\subfigure[The path of learned $\theta_3, \theta_4, \theta_5$]{
\includegraphics[width=0.31\textwidth]{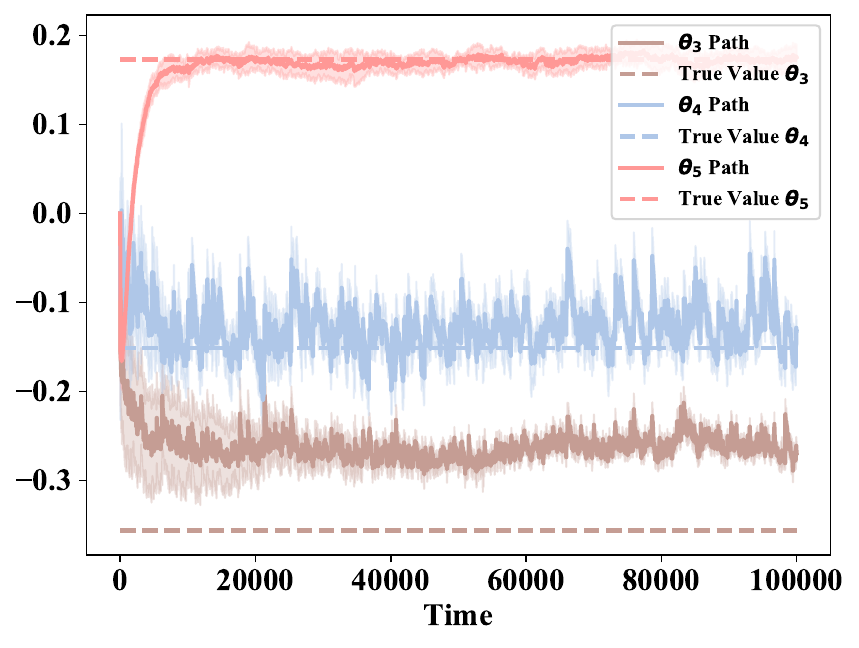}
}
\subfigure[The path of learned $v_0, v_1, v_2$]{
\includegraphics[width=0.30\textwidth]{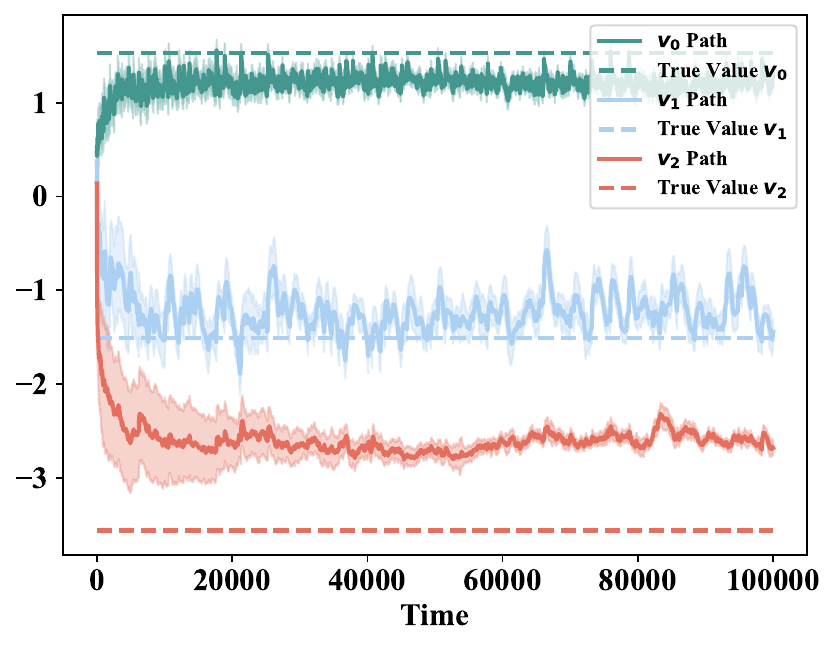}
}
\caption{Paths of learned parameters of the CQSM reinforcement learning algorithm described in Algorithm \ref{CQSMal}. A single state trajectory of length $T = 10^5$ is generated. The dashed lines indicate the optimal parameter values. The shaded regions represent the standard deviation of the learned parameters across these runs, with the width of each shaded area equal to twice the corresponding standard deviation.} 
\label{paraconver}
\end{figure}

\begin{figure}[htbp]
\centering
\subfigure[$\Delta t=0.01$]{
\includegraphics[width=0.31\textwidth]{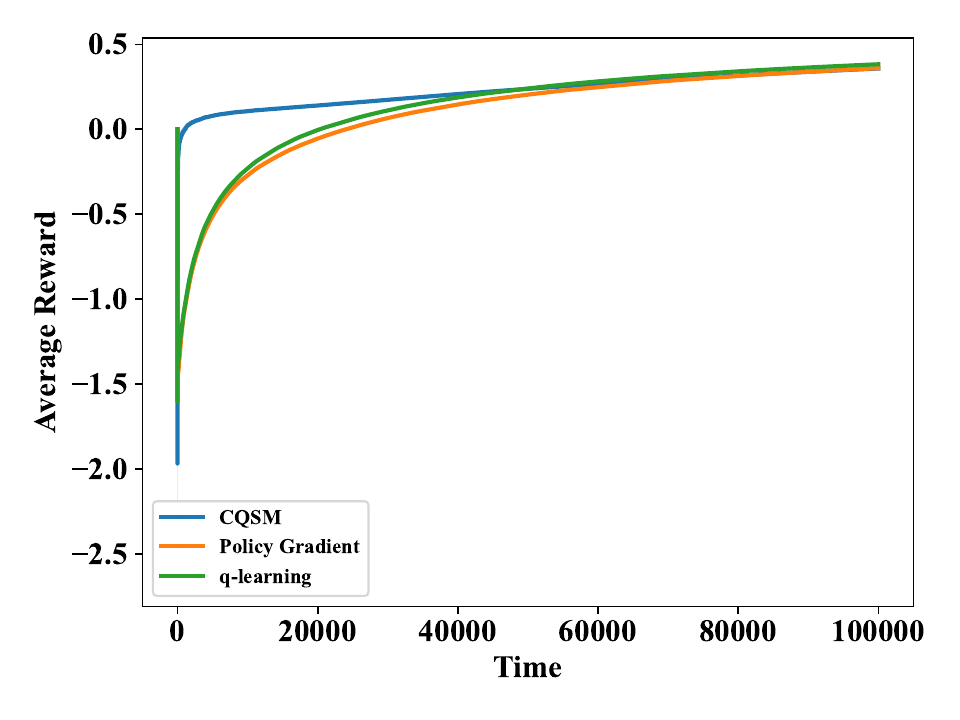}
}
\subfigure[$\Delta t=0.1$]{
\includegraphics[width=0.31\textwidth]{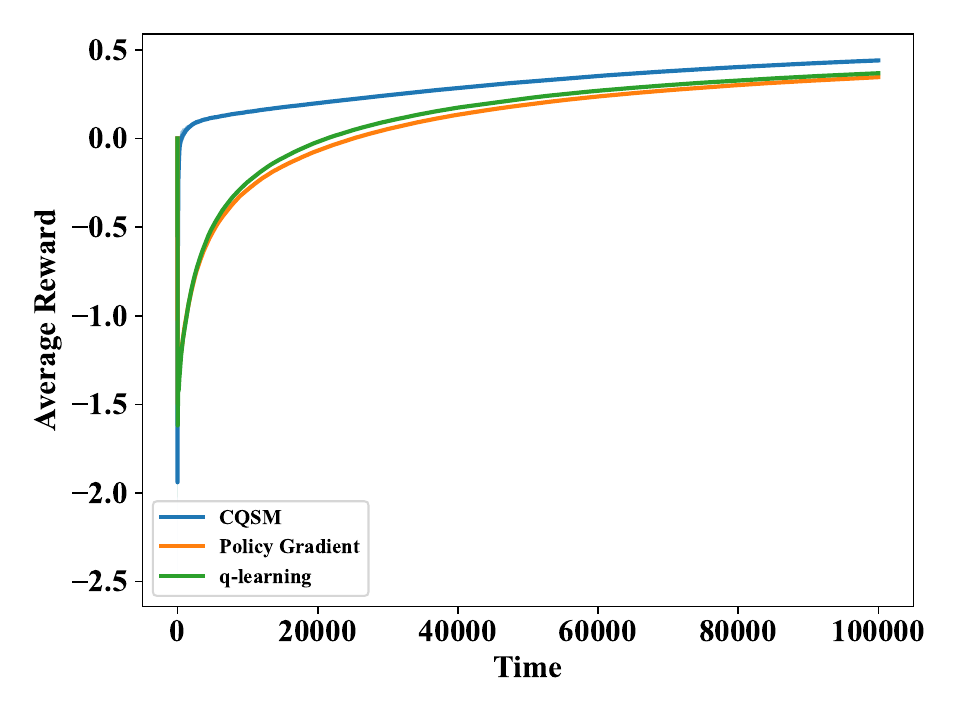}
}
\subfigure[$\Delta t=1$]{
\includegraphics[width=0.31\textwidth]{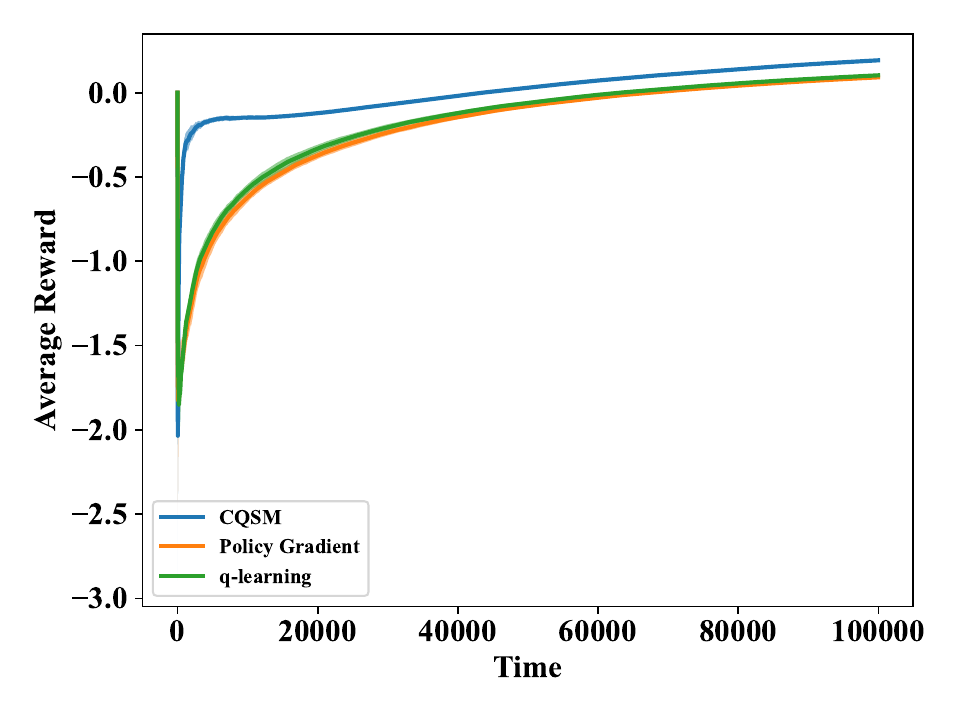}
}
\caption{Running average rewards of three RL algorithms. A single state trajectory of length $T = 10^5$ is generated and discretized using three different step sizes: $\Delta t=0.01$ in panel (a), $\Delta t = 0.1$ in panel (b), $\Delta t = 1$ in panel (c). For each setting, we apply three online algorithms: Policy Gradient described in Algorithm 3 in \cite{jia2022policygradient}, q-Learning described in Algorithm 4 in \cite{jia2023qlearning} and CQSM described in Algorithm \ref{CQSMal}. We plot the mean running average reward over time and the shaded areas represent the standard deviation across the runs.} 
\label{rewardcomp}
\end{figure}

\section{Conclusion}\label{conclu}
In this paper, we introduce a Q-function framework for continuous-time stochastic optimal control problems with diffusion policies. By using the dynamic programming principle, we derive the associated HJB equation for the Q-function. Building on this and utilizing the martingale orthogonality condition, we develop the CQSM algorithm. We further demonstrate the effectiveness of CQSM in an LQ setting, showing promising results compared to existing continuous-time reinforcement learning algorithms.

Several interesting directions remain for future work. For the finite-horizon case, extending the approach to handle an important portfolio selection with a mean-variance objective poses a challenging problem due to inherent time inconsistency. Another promising direction is the optimization of the diffusion term $\sigma_a$, which could lead to improved exploration and performance. Furthermore, a theoretical convergence rate analysis of CQSM could offer deeper insights and guide further enhancements to the algorithm.

\section*{Acknowledgements}
We gratefully acknowledge financial support from the Key Project of National Natural Science Foundation of China 72432005, Guangdong Basic and Applied Basic Research Foundation 2023A1515030197.

\bibliographystyle{plain}
\bibliography{bibliography}
\newpage
\appendix
\setcounter{figure}{0}
\setcounter{table}{0}
\setcounter{section}{0}
\renewcommand{\thefigure}{\Alph{section}.\arabic{figure}}
\renewcommand{\thetable}{\Alph{section}\arabic{table}}
\renewcommand{\thesection}{\Alph{section}}
\section{Diffusion RL: Problem Formulation and Well-Posedness}\label{wellposed}
Listed below are the standard assumptions to ensure the well-posedness of the stochastic control problem in (\ref{difupro})-(\ref{optQ}).
\begin{assumption}\label{assumption}
The following conditions for the state dynamics and reward functions hold true:\\
(1) $b_X, \sigma_X, r$ are all continuous functions in their respective arguments;\\
(2) $b_X, \sigma_X$ are globally Lipschitz continuous in $(x, a)$, i.e., for $\varphi\in \{b, \sigma_X\}$, there exists a constant $C > 0$ such that
\begin{align*}
\|\varphi(t,x,a)-\varphi(t,x',a')\|_2\leq C(\|x-x'\|_2+\|a-a'\|_2), \forall t\in [0,T],x,x'\in \mathbb{R}^n, a,a'\in \mathbb{R}^d;
\end{align*}
(3) $b_X, \sigma_X$ are linear growth continuous in $(x, a)$, i.e., for $\varphi\in \{b, \sigma_X\}$, there exists a constant $C > 0$ such that
\begin{align*}
\|\varphi(t,x,a)\|_2\leq C(\|x\|_2+\|a\|_2), \forall t\in [0,T],x\in \mathbb{R}^n, a\in \mathbb{R}^d;
\end{align*}
(4) $r$ has polynomial growth in $(x, a)$ and $x$ respectively, i.e., there exist constants $C>0$ and $\mu\geq 1$ such that
\begin{align*}
|r(t, x,a)|\leq C(1+\|x\|_2^{\mu}+\|a\|_2^{\mu}), \forall t\in [0, T] , x\in \mathbb{R}^n, a\in \mathbb{R}^d.
\end{align*}
\end{assumption}
\begin{lemma}
	Let Assumption \ref{assumption} hold and $a$ be a given admissible policy. Then the SDE (\ref{difupro}) admits a unique strong solution $(X_t, a_t)$. In addition, for any $\mu \geq 2$, the solution satisfies the growth condition $\mathbb{E}^{\mathbb{P}}\left[\max_{t\leq s \leq T}\|(X_s, a_s)\|^{\mu}_2\big|X_t=x, a_t=a\right]\leq C (1+\|x\|_2^{\mu}+\|a\|_2^{\mu})$ for some constant $C=C(\mu)$. Finally, the Q-function is finite.
\end{lemma}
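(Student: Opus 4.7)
The plan is to treat the coupled state-action equation as a single Itô SDE on $\mathbb{R}^{n+d}$. Set $Z_s=(X_s,a_s)^{\top}$, define the drift $F(s,z):=(b_X(s,z),\Psi(s,z))^{\top}$, and let $\Sigma(s,z)$ be the block-diagonal matrix with $\sigma_X$ in the upper block and $\sigma_a$ in the lower block; this is well defined because $B^X$ and $B^a$ are independent, so we can stack them into a single $(n+d)$-dimensional Brownian motion $B=(B^X,B^a)^{\top}$. Assumption~\ref{assumption}(2)–(3) on $(b_X,\sigma_X)$ together with admissibility conditions (ii)–(iii) on $(\Psi,\sigma_a)$ imply that $F$ and $\Sigma$ are globally Lipschitz and of linear growth in $z$ uniformly in $t\in[0,T]$. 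The classical existence and uniqueness theorem for SDEs (e.g.\ Theorem 6.16 in \cite{yong1999stochastic}) then yields a unique strong solution $Z_s$ on $[t,T]$ starting from $Z_t=(x,a)$.

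For the moment bound, I would apply Itô's formula to $\|Z_s\|_2^{\mu}$ (or first to $(1+\|Z_s\|_2^2)^{\mu/2}$ to avoid smoothness issues at the origin when $\mu$ is not even), then take suprema over $[t,r]$ for $r\in[t,T]$, apply the Burkholder--Davis--Gundy inequality to absorb the stochastic-integral term into $\mathbb{E}[\sup_{t\le s\le r}\|Z_s\|_2^{\mu}]$ with a small constant, and use the linear-growth estimates on $F$ and $\Sigma$ to bound the drift and quadratic variation terms by $C(1+\|Z_s\|_2^{\mu})$. Gronwall's lemma then gives
\begin{equation*}
\mathbb{E}^{\mathbb{P}}\Bigl[\sup_{t\le s\le T}\|Z_s\|_2^{\mu}\,\Big|\,Z_t=(x,a)\Bigr]\le C(1+\|x\|_2^{\mu}+\|a\|_2^{\mu}),
\end{equation*}
with $C$ depending on $\mu$, $T$ and the linear-growth and Lipschitz constants. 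This is the desired estimate.

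For finiteness of the Q-function, I would combine the moment bound with the polynomial growth of $r$ (Assumption~\ref{assumption}(4)) and a polynomial-growth assumption on $h$ (which is standard and implicit in the setup). By the triangle inequality,
\begin{equation*}
|r(s,X_s,a_s)|+\tfrac{1}{2}\lambda\|\Psi(s,X_s,a_s)\|_2^{2}+|h(X_T,a_T)|\le C\bigl(1+\|X_s\|_2^{\mu}+\|a_s\|_2^{\mu}\bigr),
\end{equation*}
where the score term is controlled using admissibility (iii) with exponent $2$, absorbed into the exponent $\mu\ge 2$. Taking expectation, exchanging expectation and the time integral by Tonelli, and plugging in the moment bound shows that $|Q(t,x,a;\Psi)|\le C(1+\|x\|_2^{\mu}+\|a\|_2^{\mu})<\infty$.

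The main obstacle I anticipate is a bookkeeping one rather than a conceptual one: verifying that the Lipschitz/linear-growth constants for the stacked coefficient $(F,\Sigma)$ are genuinely uniform in $t$, so that the constant $C$ in the moment estimate can be made to depend only on $\mu$ and the horizon $T$. If the polynomial growth assumption on $h$ is not stated explicitly, I would either add it to Assumption~\ref{assumption} or show that continuity of $h$ together with the fact that $(X_T,a_T)$ has moments of all orders suffices provided $h$ obeys polynomial growth, which is a mild and standard requirement; otherwise no essentially new idea beyond Itô--BDG--Gronwall is needed.
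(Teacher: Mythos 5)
Your proposal is correct and follows essentially the same route as the paper: stack $(X,a)$ into a single SDE $Z_s=z+\int_t^s F\,\dif u+\int_t^s G\,\dif B_u$, invoke standard existence/uniqueness under the combined Lipschitz and linear-growth conditions, and obtain the moment bound via Burkholder--Davis--Gundy plus Gronwall (the paper estimates the integral representation directly rather than applying It\^o's formula to $\|Z\|_2^{\mu}$, but this is the same argument). Your explicit treatment of the finiteness of $Q$ --- in particular flagging that a polynomial-growth condition on $h$ is needed but not stated in Assumption~\ref{assumption} --- is actually more careful than the paper, which dismisses this step as easy.
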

\begin{proof}
Let $Z_t=(X_t, a_t)$ be the state, $F(t, Z_t)=(b_X(t, Z_t), \Psi(t, Z_t))$ be the vector field, $G(t, Z_t)= (\sigma_X(t, Z_t), \sigma_a(t, Z_t))$ be the diffusion term and the Brownian motion $B_t=(B_t^X, B_t^a)$. Then the SDE yields
\begin{equation}
    Z_s=z+\int_t^s F(u, Z_u) \dif u+\int_t^s G(u,Z_u)\dif B_u.
\end{equation}	
Besides, the Q-function can be expressed as 
\begin{equation}
    Q(t, Z; \Psi)=\mathbb{E}^{\mathbb{P}}\left[\int_t^{T}  \left[r(s,Z_s)-\frac{1}{2}\lambda\|\Psi(s,Z_s)\|_2^2\right]\dif s+h(Z_T)\bigg|Z_t=z\right].
\end{equation}
The unique existence of the strong solution to (\ref{difupro}) then follows from the standard SDE theory.
	
Based on the growth condition on $F, G$, the Cauchy-Schwarz inequality and the Burkholder-Davis-Gundy inequalities, we obtain the following
\begin{equation}
    \begin{aligned}
        &\mathbb{E}^{\mathbb{P}}\left[\max_{t\leq s \leq T}\|Z_s\|^{\mu}_2\big|Z_t=z\right]\\
    \leq& C_1\mathbb{E}^{\mathbb{P}}\left[\|z\|_2^{\mu}+\max_{t\leq s \leq T}\left\|\int_t^s F(u, Z_u) \dif u\right\|_2^{\mu}+\max_{t\leq s \leq T}\left\|\int_t^s G(u, Z_u) \dif B_u\right\|_2^{\mu}\big|Z_t=z\right]\\
    \leq & C_1\mathbb{E}^{\mathbb{P}}\left[\|z\|_2^{\mu}+C_2\int_t^T  (1+\max_{t\leq s \leq u}\left\|Z_s\right\|_2^{\mu})\dif u+C_3\left(\int_t^T (1+\max_{t\leq s \leq u}\left\|Z_s\right\|_2^{\mu}) \dif u\right)^{\mu/2}\big|Z_t=z\right]\\
    \leq& C_4 (1+\|z\|_2^{\mu})+C_5 \int_t^T  \mathbb{E}^{\mathbb{P}}\left[\max_{t\leq s \leq u}\left\|Z_s\right\|_2^{\mu}\big|Z_t=z\right]\dif u.
    \end{aligned}
\end{equation}
Applying Gronwall’s inequality, we have $\mathbb{E}^{\mathbb{P}}\left[\max_{t\leq s \leq T}\|Z_s\|^{\mu}_2\big|Z_t=z\right]\leq C(1+\|z\|_2^{\mu})$ for some constant $C=C(\mu).$ Hence, it is easy to get that the Q-function is finite.
\end{proof}

\section{Proofs of Martingale Characterization and Score Improvement Theorem}
\subsection{Proof of Theorem \ref{thmopQ}}\label{proofthm1}
To show $M_s=Q(s,X_s, a_s; \Psi)+\int_t^s \left[r(u,X_u, a_u)-\frac{1}{2}\lambda\|\Psi(u,X_u,a_u)\|_2^2\right]\dif u$ is a martingale, observe that
\begin{equation}
    \begin{aligned}        
    M_s=&\mathbb{E}^{\mathbb{P}}\left[\int_s^{T}  \left[r(u,X_u,a_u)-\frac{1}{2}\lambda\|\Psi(u,X_u,a_u)\|_2^2\right]\dif s+h(X_T, a_T)\bigg|X_s, a_s \right]\\
    &+\int_t^s \left[r(u,X_u, a_u)-\frac{1}{2}\lambda\|\Psi(u,X_u,a_u)\|_2^2\right]\dif u\\    =&\mathbb{E}^{\mathbb{P}}\left[M_T|\mathcal{F}_s\right],
    \end{aligned}
\end{equation}
where we have used the Markov property of the process $\{(X_s, a_s), t \leq s \leq T \}$. This establishes that $M$ is a martingale.
	
Conversely, if $\tilde{M}$ is a martingale, then $\tilde{M}_s=\mathbb{E}^{\mathbb{P}}\left[\tilde{M}_T|\mathcal{F}_s\right]$, which is equivalent to
\begin{equation}
    \begin{aligned}
        \tilde{Q}(s, X_s,a_s; \Psi)=&\mathbb{E}^{\mathbb{P}}\left[\int_s^{T}  \left[r(u,X_u,a_u)-\frac{1}{2}\lambda\|\Psi(u,X_u,a_u)\|_2^2\right]\dif u+\tilde{Q}(T, X_T, a_T)\bigg|\mathcal{F}_s\right]\\
    =&\mathbb{E}^{\mathbb{P}}\left[\int_s^{T}  \left[r(u,X_u,a_u)-\frac{1}{2}\lambda\|\Psi(u,X_u,a_u)\|_2^2\right]\dif u+h(X_T)\bigg|\mathcal{F}_s\right]\\
    =&Q(s, X_s,a_s; \Psi), s\in[t,T].
    \end{aligned}
\end{equation}
Letting $ s = t$, we conclude $\tilde{Q}(t,x,a;\Psi)=Q(t,x,a;\Psi)$.

The “only if” part is evident. To prove the “if” part, assume that $\dif M_t=A_t \dif t+C_t \dif B_t.$ In particular, in our case, $A_t=\mathcal{L}Q(t,x,a; \Psi)+r(t,X_t, a_t)-\frac{1}{2}\lambda\|\Psi(t,X_t,a_t)\|_2^2$ and $C_t=\left(\frac{\partial Q}{\partial Z}\right)^{\top}G(t,Z_t)$. $A, C\in L^2_{\mathcal{F}}([0,T])$ follows by assumption ($Q\in C^{1,2,2}([0,T)\times\mathbb{R}^n\times \mathbb{R}^d)\cap  C([0,T)\times\mathbb{R}^n\times \mathbb{R}^d)$ satisfying the polynomial growth condition in $z=(x,a)$). For any $0\leq s< s'\leq T$, take $\xi_t=sgn(A_t)$ if $t\in [s,s']$ and $\xi_t=0$ otherwise. Then
\begin{equation}
    0=\mathbb{E}^{\mathbb{P}}\int_s^{s'}\xi_t \dif M_t=\mathbb{E}^{\mathbb{P}}\int_s^{s'}(|A_t|\dif t+\xi_t C_t \dif B_t)=\mathbb{E}^{\mathbb{P}}\int_s^{s'}|A_t|\dif t,
\end{equation}
where the expectation of the second term vanishes because $|\xi C| \leq |C|\in L^2_{\mathcal{F}}([0,T])$ and hence $\mathbb{E}^{\mathbb{P}}\int_0^. \xi_t C_t \dif B_t$ is a martingale. This yields $A_t = 0$ almost surely, and thus $M$ is a martingale.

\subsection{Proof of Theorem \ref{PIT}}\label{proofthm2}
Fix $(x,a)\in \mathbb{R}^n\times \mathbb{R}^d$, applying Ito’s formula, we have
\begin{equation}
    \begin{aligned}
        e^{-\beta s} Q(X_s,a_s; \Psi)=e^{-\beta t}Q(x,a; \Psi)+\int_t^s e^{-\beta \tau}  \left\{-\beta Q(X_{\tau},a_{\tau}; \Psi) + \nabla_x Q^{\top} \cdot b_X(X_{\tau},a_{\tau})\right.\\
    \left.+ \nabla_a Q^{\top} \cdot\Psi(X_{\tau},a_{\tau}))+\frac{1}{2}\text{tr}\left(\sigma_X \sigma_X^{\top}\nabla_x^2 Q\right)+\frac{1}{2}\text{tr}\left(\sigma_a \sigma_a^{\top}\nabla_a^2 Q\right)\right\}\dif \tau\\
    +\int_t^s e^{-\beta \tau}\nabla_x Q^{\top} \cdot \sigma_X(X_{\tau},a_{\tau})\dif B^X_{\tau}+e^{-\beta \tau} \nabla_a Q^{\top} \cdot \sigma_a(X_{\tau},a_{\tau})\dif B^a_{\tau}.
    \end{aligned}
\end{equation}
Define the stopping times $T_n:=\inf\{s\geq t: \int_t^s \|e^{-\beta \tau}\nabla_x Q^{\top} \cdot \sigma_X(X_{\tau},a_{\tau})\|_2^2\dif \tau+\int_t^s \|e^{-\beta \tau}\nabla_a Q^{\top} \cdot \sigma_a(X_{\tau},a_{\tau})\|_2^2\dif \tau\geq n\}$, for $n \geq 1.$ Then we have
\begin{equation}\label{Q37}
    \begin{aligned}
        &\mathbb{E}^{\mathbb{P}}\left[e^{-\beta(s\land T_n)}Q(X_{s\land T_n},a_{s\land T_n}; \Psi)\bigg| X_t=x, a_t=a\right]=
        e^{-\beta t}Q(x,a; \Psi)\\ 
        &\quad \quad+\mathbb{E}^{\mathbb{P}}\left[\int_t^{s\land T_n}e^{-\beta (\tau-t)}\left\{-\beta Q(X_{\tau},a_{\tau}; \Psi) + \nabla_x Q^{\Psi,\top} \cdot b_X(X_{\tau},a_{\tau})+ \nabla_a Q^{\top} \cdot \Psi(X_{\tau},a_{\tau})\right.\right.\\
        &\quad \quad\left.\left. +\frac{1}{2}\text{tr}\left(\sigma_X \sigma_X^{\top}\nabla_x^2 Q\right)+\frac{1}{2}\text{tr}\left(\sigma_a \sigma_a^{\top}\nabla_a^2 Q\right)\right\}\dif \tau\bigg| X_t=x, a_t=a\right].
    \end{aligned}
\end{equation}
On the other hand, by standard arguments and the assumption that $Q(\cdot, \cdot; \Psi)$ is smooth, we have
\begin{equation}
    \begin{aligned}
        \beta Q(x,a; \Psi) -  \left\{ \nabla_x Q^{\top} \cdot b_X(x,a)+ \nabla_a Q^{\top} \cdot \Psi(x,a)\right.\\
    \left. +\frac{1}{2}\text{tr}\left(\sigma_X \sigma_X^{\top}\nabla_x^2 Q\right)+\frac{1}{2}\text{tr}\left(\sigma_a \sigma_a^{\top}\nabla_a^2 Q\right)+ r(x,a)-\frac{1}{2}\lambda\|\Psi\|_2^2\right\}=0,
    \end{aligned}
\end{equation}
for any $(x,a)\in \mathbb{R}^n\times \mathbb{R}^d$. It follows that
\begin{equation}\label{HJQleq}
    \begin{aligned}
        \beta Q(x,a; \Psi) - \sup_{\tilde{\Psi}} \left\{ \nabla_x Q^{\Psi,\top} \cdot b_X(x,a)+ \nabla_a Q^{\top} \cdot \tilde{\Psi}(x,a)\right.\\
        \left. +\frac{1}{2}\text{tr}\left(\sigma_X \sigma_X^{\top}\nabla_x^2 Q\right)+\frac{1}{2}\text{tr}\left(\sigma_a \sigma_a^{\top}\nabla_a^2 Q\right)+ r(x,a)-\frac{1}{2}\lambda\|\tilde{\Psi}\|_2^2\right\}\leq 0.
    \end{aligned}
\end{equation}
Notice that the minimizer of the Hamiltonian in (\ref{HJQleq}) is given by $\Psi^{1}=\lambda^{-1} \nabla_a Q(x,a; \Psi)$ for some $\lambda>0$. It then follows that Equation (\ref{Q37}) implies
\begin{equation}
    \begin{aligned}
        &\mathbb{E}^{\mathbb{P}}\left[e^{-\beta(s\land T_n)}Q(X_{s\land T_n},a_{s\land T_n}; \Psi)\bigg| X_t=x, a_t=a\right]\\
    \geq & e^{-\beta t} Q(x,a; \Psi)
    - \mathbb{E}^{\mathbb{P}}\left[\int_t^{s\land T_n}e^{-\beta (\tau-t)}\left(r(x_{\tau},a_{\tau})-\frac{\lambda}{2}\|\Psi^{1}\|_2^2\right)\dif \tau \bigg| X_t=x, a_t=a\right].
    \end{aligned}
\end{equation}
Sending $n\to \infty$, we deduce that
\begin{equation}
    \begin{aligned}
        &\mathbb{E}^{\mathbb{P}}\left[e^{-\beta s}Q(X_{s},a_{s}; \Psi) \big| X_t=x, a_t=a\right]\\
    \geq& e^{-\beta t} Q(x,a; \Psi)- \mathbb{E}^{\mathbb{P}}\left[\int_t^s e^{-\beta (\tau-t)}\left(r(x_{\tau},a_{\tau})-\frac{\lambda}{2}\|\Psi^{1}\|_2^2\right)\dif \tau \bigg| X_t=x, a_t=a\right].
    \end{aligned}
\end{equation}
Noting that 
\begin{equation}
    \begin{aligned}
        &\liminf_{s\to \infty}\mathbb{E}^{\mathbb{P}}\left[e^{-\beta s}Q(X_{s},a_{s}; \Psi)\big| X_t=x, a_t=a\right]\\
    \leq& \limsup_{s\to \infty}\mathbb{E}^{\mathbb{P}}\left[e^{-\beta s}Q(X_{s},a_{s}; \Psi)\big| X_t=x, a_t=a\right]=0
    \end{aligned}
\end{equation}
and applying the dominated convergence theorem yield
\begin{equation}
    Q(x,a; \Psi)\leq \mathbb{E}^{\mathbb{P}}\left[\int_t^{\infty} e^{-\beta \tau}\left(r(x_{\tau},a_{\tau})-\frac{\lambda}{2}\|\Psi^{1}\|_2^2\right)\dif \tau\bigg| X_t=x, a_t=a\right]=Q(x,a; \Psi^{1}).
\end{equation}

\section{Derivation of Linear-Quadratic Stochastic Control}\label{LQexam}
The following derivation corresponds to Section 5 of the main text.
\begin{assumption}\label{asspdisscounted}
	The discounted rate satisfies $\beta > 2A+ C^2.$ 
\end{assumption}
This assumption ensures a sufficiently large discount rate, which guarantees that $\liminf_{T\to \infty}e^{-\beta T}\mathbb{E}^{\mathbb{P}}\left[Q(X_{T},a_{T}; \Psi)\right]=0$ for any score $\Psi$, thereby ensuring the corresponding expected reward remains finite.

By HJB equation
\begin{equation}
    \beta Q(x, a) - Q_xb(x, a) - \frac{1}{2\lambda}Q_a^2 - \frac{1}{2}\sigma_X^2Q_{xx} - \frac{1}{2}\sigma_a^2Q_{aa} - r(x, a) = 0,
\end{equation}
we have
\begin{align*}
	x^2: &\quad \frac{\beta}{2}k_0 - Ak_0 - \frac{1}{2\lambda}k_4^2 - \frac{1}{2}C^2k_0  + M/2 = 0, \\
	x: &\quad \beta k_1 - Ak_1 - \frac{k_3k_4}{\lambda} +P= 0, \\
	a^2: &\quad \frac{\beta}{2}k_2 - k_4B - \frac{1}{2\lambda}k_2^2 - \frac{1}{2}k_0D^2  + N/2 = 0, \\
	a: &\quad \beta k_3 - Bk_1 - \frac{k_2k_3}{\lambda} +P'= 0, \\
	xa: &\quad \beta k_4 - k_0B - k_4A - \frac{1}{\lambda}k_2k_4 - k_0CD  + R = 0, \\
	\text{Cons}: &\quad \beta k_5-k_2 - \frac{1}{2\lambda}k_3^2 = 0.
\end{align*}
By $x^2$ term:
\begin{equation}
    k_0=\frac{1}{\lambda(\beta-2A-C^2)}k_4^2 -\frac{M}{\beta-2A-C^2}
\end{equation}
and substitute $k_0$ to $a^2$ term, we obtain
\begin{align}
	\frac{\beta}{2}k_2 - k_4B - \frac{1}{2\lambda}k_2^2 - \frac{1}{2}D^2\left(\frac{1}{\lambda(\beta-2A-C^2)}k_4^2 -\frac{M}{\beta-2A-C^2}\right)  + N/2 = 0,\\
	\frac{\beta}{2}k_2- \frac{1}{2\lambda}k_2^2+ N/2=k_4 B + \frac{1}{2}D^2\left(\frac{1}{\lambda(\beta-2A-C^2)}k_4^2 -\frac{M}{\beta-2A-C^2}\right):=\varpi.
\end{align}
First, we consider $D\neq 0$. Hence, we have
\begin{align}
	k_2=&\frac{\beta}{2}\lambda-\lambda\sqrt{\frac{\beta^2}{4}+\frac{1}{\lambda}(N-2\varpi)},\\
	k_4=&\frac{-\lambda(\beta-2A-C^2)B\pm\sqrt{\lambda^2(\beta-2A-C^2)^2 B^2+D^2(D^2\lambda M+2\lambda(\beta-2A-C^2)\varpi)}}{D^2}.
\end{align}
By $xa$ term, we can determine the value of $\varpi$ and $\varpi$ satisfies the following bounds
\begin{align}
	\frac{\beta^2}{4}+\frac{1}{\lambda}(N-2\varpi)\geq 0, \\
	\lambda^2(\beta-2A-C^2)^2 B^2+D^2(D^2\lambda M+2\lambda(\beta-2A-C^2)\varpi)\geq 0.
\end{align}
If $D=0$, then
\begin{align}
    k_2=\frac{\beta \lambda}{2}-\lambda \sqrt{\frac{\beta^2}{4}-\frac{2}{\lambda}\left(\frac{N}{2}-k_4 B\right)},
\end{align}
and $k_4$ satisfies the following equation:
\begin{equation}
    -\frac{B}{\lambda(\beta-2A)}k_4^2+\left(\frac{\beta}{2}-A+\sqrt{\frac{\beta^2}{4}-\frac{2}{\lambda}\left(\frac{N}{2}-k_4 B\right)}\right)k_4+\frac{MB}{\beta-2A}+R=0.
\end{equation}
Furthermore, by $x$ term and $a$ term, we have
\begin{align}
	k_1=&\frac{1}{\lambda(\beta-A)}k_3 k_4 -\frac{P}{\beta-A},\\
	k_3=&-\frac{BP+P'(\beta-A)}{(\beta-A)\left(\beta-\frac{B}{\lambda(\beta-A)}k_4-\frac{1}{\lambda}k_2\right)}.
\end{align}
Finally, we have 
\begin{equation}
    k_5=\frac{2\lambda k_2+k_3^2}{2\lambda \beta}.
\end{equation}
Therefore, the optimal score function takes the form:
\begin{equation}
    \Psi^*(x,a)=\lambda^{-1}\nabla_a Q(x,a)=\lambda^{-1}(k_2  a+k_3+k_4 x).
\end{equation}
\section{Continuous Actor-Critic Q-Learning Algorithms}\label{secalgoriithms}
\paragraph{Score Evaluation of the Q-Function.} We provide a complete description of how the HJB equation is used to construct the continuous-time Q-learning algorithm. For estimating $Q(x, a; \Psi)$, a number of algorithms can be developed based on two types of objectives: to minimize the martingale loss function or to satisfy the martingale orthogonality conditions. 
We summarize these methods in the Q-learning context below.\\
(1) Minimize the martingale loss function:
\begin{equation}
    \frac{1}{2}\mathbb{E}^{\mathbb{P}}\left[\int_0^T \left[h(X_T, a_T)-Q^{\theta}(t,X_t,a_t)+\int_t^T \left(r(s,X_s,a_s)-\frac{\lambda}{2}\|\Psi^v(s,X_s,a_s)\|^2\right)\dif s\right]^2\dif t\right].
\end{equation}
This method is intrinsically offline because the loss function involves the whole horizon $[0, T ]$. We can apply stochastic gradient decent to update
\begin{equation}
\begin{aligned}
    \theta &\gets \theta+\alpha_{\theta} \int_0^T  \frac{\partial Q^{\theta}}{\partial \theta}(t,X_t,a_t)G_{t:T} \dif t,\\
    v &\gets v+\alpha_v \int_0^T \int_t^T \lambda \Psi^v(s,X_s,a_s)\frac{\partial \Psi^v}{\partial v}(s,X_s,a_s)\dif sG_{t:T} \dif t,
\end{aligned}   
\end{equation}
where $G_{t:T}=h(X_T, a_T)-Q^{\theta}(t,X_t,a_t)+\int_t^T \left(r(s,X_s,a_s)-\frac{\lambda}{2}\|\Psi^v(s,X_s,a_s)\|^2\right)\dif s$. We present Algorithm \ref{mlcqsm} based on this updating rule. Note that this algorithm is analogous to the classical gradient Monte Carlo method or TD(1) for MDPs \cite{sutton1998reinforcement} because full sample trajectories are used to compute gradients.

(2) We leverage the martingale orthogonality condition, which states that for any $T>0$ and a suitable test process $\xi$,
\begin{equation}
    \mathbb{E}^{\mathbb{P}}\int_0^T \xi_t \left\{\dif Q^{\theta}(t,X_t, a_t; \Psi) +\left[r(t,X_t, a_t)-\frac{1}{2}\lambda\|\Psi^v(t,X_t,a_t)\|_2^2\right]\dif t\right\}=0.
\end{equation}
We use stochastic approximation to update $\theta$ either offline by
\begin{equation}
   \theta \gets \theta+\alpha_{\theta}\int_0^T \xi_t \left\{\dif Q^{\theta}(t,X_t, a_t; \Psi) +\left[r(t,X_t, a_t)-\frac{1}{2}\lambda\|\Psi^v(t,X_t,a_t)\|_2^2\right]\dif t\right\},
\end{equation}
or online by
\begin{equation}
   \theta \gets \theta+\alpha_{\theta} \xi_t \left\{\dif Q^{\theta}(t,X_t, a_t; \Psi) +\left[r(t,X_t, a_t)-\frac{1}{2}\lambda\|\Psi^v(t,X_t,a_t)\|_2^2\right]\dif t\right\}.
\end{equation}
Typical choices of test functions are $\xi_t=\frac{\partial Q^{\theta}}{\partial \theta}$ or $\xi_t=\int_0^t \rho^{s-t}\frac{\partial Q^{\theta}}{\partial \theta} \dif s, 0< \rho \leq 1$ which lead to Q-learning algorithms based on stochastic approximation. However, relying solely on this specific choice does not, in general, guarantee satisfaction of the full martingale condition. Moreover, the convergence of the resulting stochastic approximation algorithm is not assured without additional assumptions. As discussed in \cite{jia2022policy}, the selection of test functions must be carefully tailored to the structure of the Q-function, highlighting the need for more robust and theoretically grounded choices in continuous-time settings.

(3) Choose the same type of test functions $\xi_t$ as above but now minimize the GMM objective functions:
\begin{equation}
\begin{aligned}
&\mathbb{E}^{\mathbb{P}}\left[ \int_{0}^{T} \xi_{t}\left[\mathrm{d} J^{\theta}(t, X_{t}^{\pi^{\psi}})+r(t, X_{t}^{\pi^{\psi}}, a_{t}^{\pi^{\psi}}) \mathrm{d}t - q^{\psi}(t, X_{t}^{\pi^{\psi}}, a_{t}^{\pi^{\psi}}) \mathrm{d}t - \beta J^{\theta}(t, X_{t}^{\pi^{\psi}}) \mathrm{d}t \right]^{\top} \right] \\
& A_{\theta} \mathbb{E}^{\mathbb{P}}\left[ \int_{0}^{T} \xi_{t}\left[\mathrm{d} J^{\theta}(t, X_{t}^{\pi^{\psi}})+r(t, X_{t}^{\pi^{\psi}}, a_{t}^{\pi^{\psi}}) \mathrm{d}t - q^{\psi}(t, X_{t}^{\pi^{\psi}}, a_{t}^{\pi^{\psi}}) \mathrm{d}t - \beta J^{\theta}(t, X_{t}^{\pi^{\psi}}) \mathrm{d}t \right] \right],
\end{aligned}
\end{equation}
where \(A_{\theta} \in \mathbb{S}^{L_{\theta}}\). Typical choices of these matrices are \(A_{\theta}=I_{L_{\theta}}\) or \(A_{\theta}=(\mathbb{E}^{\mathbb{P}}[\int_{0}^{T} \xi_{t} \xi_{t}^{\top} \mathrm{d}t ])^{-1}\). Again, we refer the reader to \cite{jia2022policy} for discussions on these choices and the connection with the classical GTD algorithms and GMM method.

\paragraph{Score Gradient.}
We aim to compute the score gradient $g(x,a; v):=\frac{\partial}{\partial v}Q(x,a; \Psi^v)\in \mathbb{R}^{L_v}$ at the current state-action pair $(x,a)$. Based on the HJB of the Q-function, we take the derivative in $v$ on both sides to get
\begin{equation}
    \mathcal{L} g(x,a;v)-\beta g(x,a;v)  + (\nabla_a Q(x,a; \Psi^v)-\lambda \Psi^v(x,a))\frac{\partial \Psi^v}{\partial v}(x,a)=0.
\end{equation}
Thus, a Feynman-Kac formula represents $g$ as
\begin{equation}
    g(x,a;v)=\mathbb{E}^{\mathbb{P}}\left[\int_t^{\infty} e^{-\beta s} (\nabla_a Q(X_s,a_s; \Psi^v)-\lambda \Psi^v(X_s,a_s))\frac{\partial \Psi^v}{\partial v}(X_s,a_s) \dif s\bigg| X_t=x,a_t=a\right].
\end{equation}
To treat the online case, assume that $v^*$ is the optimal point of $Q(x,a; \Psi^v)$ for any $(x,a)$ and that the first-order condition holds (e.g., when $v^*$ is an interior point). Then $g(x,a;v^*)=0$. It follows that
\begin{equation}\label{scoregradient}
0=\mathbb{E}^{\mathbb{P}}\left[\int_t^{\infty}\eta_s (\nabla_a Q(X_s,a_s; \Psi^v)-\lambda \Psi^v(X_s,a_s))\frac{\partial \Psi^v}{\partial v}(X_s,a_s) \dif s\bigg| X_t=x,a_t=a\right],
\end{equation}
for any $\eta \in L^2_{\mathcal{F}}([0,T]; Q(\cdot, X.,a.; \Psi))$. If we take $\eta_s=e^{-\beta s}$, then the right hand side (\ref{scoregradient}) coincides with $g(x,a;v^*).$ More importantly, besides the flexibility of choosing different sets of test functions, (\ref{scoregradient}) provides a way to derive a system of equations based on only past observations and, hence, enables online learning. For example, by taking $\eta_s=0$ on $[T, \infty]$, (\ref{scoregradient}) involves sample trajectories up to time $T$. Thus, learning the optimal policy either offline or online boils down to solving a system of equations (with suitably chosen test functions) via stochastic approximation to find $v^*.$ Online learning of (\ref{scoregradient}) is the same as the update rule:
 $v \in \arg\min \frac{1}{2}\|\Psi^v(x,a)-\lambda^{-1} \nabla_a Q(x,a)\|^2$.

Here we present Offline Continuous Q-Score Matching (CQSM) Martingale Loss Algorithm \ref{mlcqsm} in  the infinite-horizon setting .
\begin{algorithm}
\caption{Offline Continuous Q-Score Matching (CQSM) Martingale Loss Algorithm}
\label{mlcqsm}
\begin{algorithmic}[1]		
\State \textbf{Inputs:} Initial state $x_0, a_0$, horizon $T$, time step $\Delta t$, number of episodes $N$, number of mesh grids $K$, initial learning rates $\alpha_\theta, \alpha_v$ and a learning rate schedule function $l(\cdot)$, functional forms of parameterized action value function $Q^\theta(\cdot, \cdot)$ and score function $\Psi^v(\cdot, \cdot)$ and  regularization parameter $\lambda$.
\State Initialize $\theta, v$.
\For{episode $j = 1$ to $N$}
    \State Initialize $k = 0$. Observe initial state $x_0$ and store $x_{t_k} \gets x_0$.
    \State Choose action by iteratively denoising $a^{T}\sim \mathcal{N}(0,1) \to a^0$ using $\Psi^v: a_{t_k}\sim \pi^v(x_{t_k})$;
    \State  Step environment $\{r_{t_k}, x_{t_{k+1}}\} = env(a_{t_k})$.
    \State Obtain one observation $\{a_{t_k}, r_{t_k}, x_{t_{k}}\}_{k=0,\cdots,K-1}$.
    \State For every $k = 0$ to $K-1$, compute
    \begin{equation}
    G_{t_k:T} = -e^{-\beta  t_k} Q^\theta( x_{t_k}, a_{t_k}) + \sum_{i=k}^{K-1} e^{-\beta t_i } [r( x_{t_i}, a_{t_i}) -\frac{\lambda}{2} \|\Psi^v( x_{t_i}, a_{t_i})\|^2] \Delta t.
    \end{equation}
    \State Update $\theta$ and $v$ by
    \begin{equation}
    \theta \gets \theta + l(j) \alpha_\theta \sum_{k=0}^{K-1} \frac{\partial Q^\theta}{\partial \theta}(x_{t_k}, a_{t_k}) G_{t_k:T} \Delta t.
    \end{equation}
    \begin{equation}
    v \gets v + l(j) \alpha_v \sum_{k=0}^{K-1} \left[ \sum_{i=k}^{K-1}\lambda \Psi^v(x_{t_i}, a_{t_i}) \frac{\partial \Psi^v}{\partial v}(x_{t_i}, a_{t_i}) \Delta t \right] G_{t_k:T} \Delta t.
    \end{equation}
\EndFor
\end{algorithmic}
\end{algorithm}

\section{Scored Based Diffusion Models}
Consider the forward SDE with state space $Y_t\in \mathbb{R}^d$ is defined as
\begin{equation}\label{forwardsde}
    \dif Y_t=f(t,Y_t)\dif t+g(t)\dif B_t^a,Y_0 \sim \pi_{\text{data}}(\cdot),
\end{equation}
where $f: \mathbb{R}_{+}\times \mathbb{R}^d \to \mathbb{R}^d$ and $g: \mathbb{R}_{+} \to \mathbb{R}_{+}$. Denote $\pi(t,\cdot)$ as the probability density of $Y_t$.

Set time horizon $T>0$ to be fixed, and run the SDE (\ref{forwardsde}) until time $T$ to get $Y_T \sim \pi(T,\cdot)$. The time reversal $Y_t^{rev}:=Y_{T-t}$ for $0\leq t \leq T$ satisfies an SDE, under some mild conditions on $f$ and $g$:
\begin{equation}\label{reversde}
    \dif Y_t^{rev}=\left(-f(T-t, Y^{rev})+\frac{1+\eta^2}{2}g^2(T-t)\nabla_y\log\pi(T-t,Y^{rev}) \right)\dif t+\eta g(T-t)\dif B_t^a,
\end{equation}
where $\nabla_y\log\pi(t,y)$ is known as the stein score function and $\eta \in [0,1]$ is a constant. In addition, a special but important case by taking $\eta=0$ in the (\ref{reversde}), this results to a flow ODE \cite{song2021scorebased}:
\begin{equation}
    \dif Y_t^{rev}=\left(-f(T-t, Y^{rev})+\frac{1}{2}g^2(T-t)\nabla_y\log\pi(T-t,Y^{rev}) \right)\dif t,
\end{equation}
which can enable faster sampling and likelihood computation thanks to the deterministic generation.

Since the score function $\nabla_y\log\pi(t,y)$ is unknown, diffusion models learn a function approximation $s_{\theta}(t,y)$, parameterized by $\theta$ (usually a neural network), to the true score by minimizing the MSE or the Fisher divergence (between the learned distribution and true distribution), evaluated by samples generated through the forward process (\ref{forwardsde}):
\begin{equation}
    \theta^*=\arg\max_{\theta}\mathbb{E}_{t\sim \text{Uni}(0,T)}\left\{ \lambda(t)\mathbb{E}_{y_0\sim \pi_{\text{data}}}\mathbb{E}_{y_t \sim \pi(t,\cdot|y_0)}\left[\|s_{\theta}(t,y_t)-\nabla_{y_t}\log \pi(t,y_t|y_0)\|_2^2\right]\right\}
\end{equation}
where $\lambda: [0,T]\to \mathbb{R}_{>0}$ is a chosen positive weighting function. When choosing the weighting functions as $\lambda(t) = g^2(t)$, this score matching objective is equivalent to maximizing an evidence lower bound (ELBO) of the log-likelihood.

For action sampling, we view the action process as the reverse process. First we set $\sigma_a(t,X,a^t)=\sigma_t^2 I$ to untrained time dependent constants. Experimentally, we set $\beta_t=\sigma_t^2$ and $\alpha_t=1-\beta_t, \bar{\alpha}_t=\prod_{s=1}^t \alpha_s$.  When applied to score matching with denoising diffusion probabilistic modeling (DDPM) \cite{Ho2020Denoise}, samples can be generated by starting from $a^T \sim \mathcal{N}(0,I)$ and following the reverse Markov chain as below
\begin{equation}
    a^{t-1}=\frac{1}{\sqrt{\alpha_t}}\left(a^t+\frac{1-\alpha_t}{\sqrt{1-\bar{\alpha}_t}}\Psi(a_t,x_t)\right)+\sigma_t Z, Z\sim \mathcal{N}(0,I).
\end{equation}

\section{Additional Experiments}\label{app:expe}
\subsection{Deterministic Case of LQ Control Tasks}
We set $C = D = 0$ to eliminate stochasticity. Figure \ref{ODEcase} shows the running average reward of CQSM, PG and little q-learning. We observe that the resulting performance is comparable to the stochastic case.
\begin{figure}[htbp]
\label{ODEcase}
\centering
\subfigure[$\Delta t=0.1$]{
\includegraphics[width=0.45\textwidth]{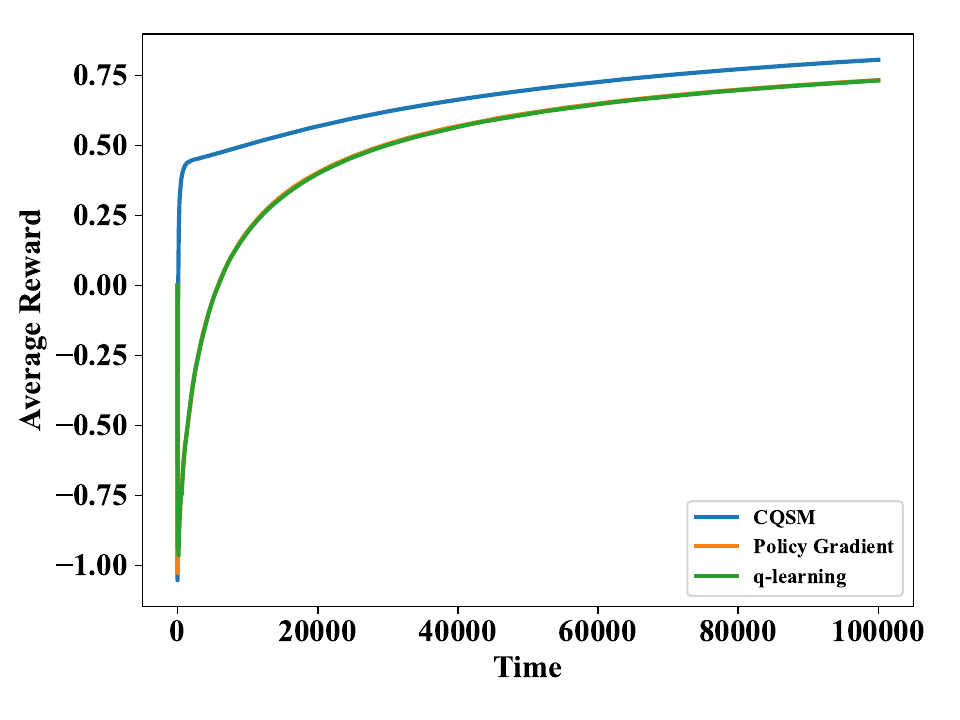}
}
\subfigure[$\Delta t=1$]{
\includegraphics[width=0.45\textwidth]{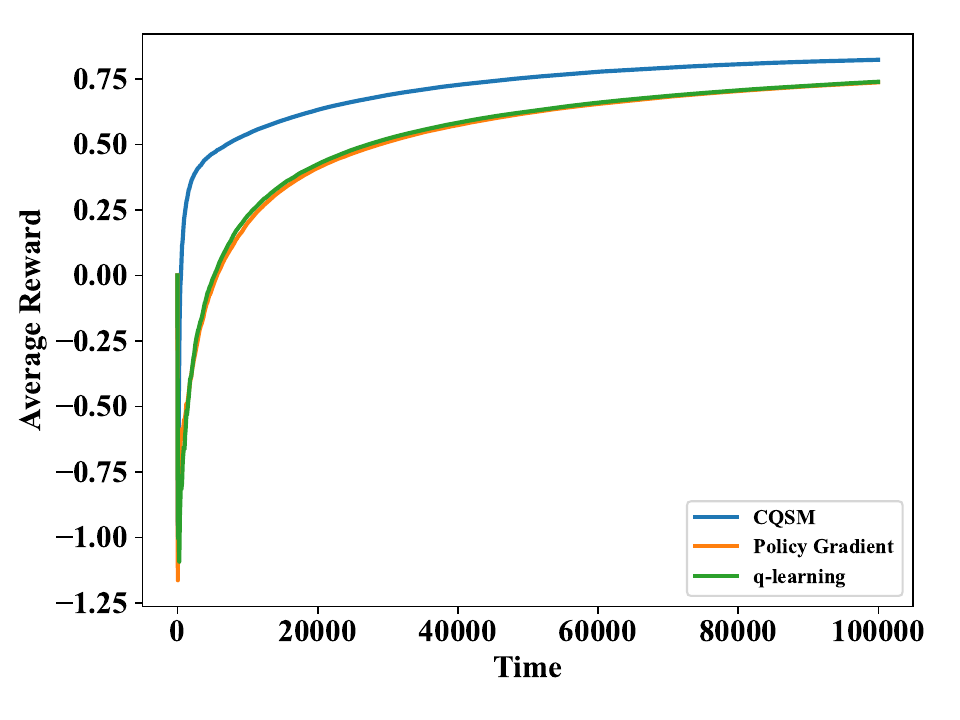}
}
\caption{Running average rewards of three RL algorithms. A single state trajectory of length $T = 10^5$ is generated and discretized using two different step sizes: $\Delta t=0.1$ in panel (a), $\Delta t = 1$ in panel (b).} 
\end{figure}
\subsection{Continuous Control Benchmark Tasks}
We evaluate CQSM on continuous control benchmarks from the DeepMind Control Suite. DeepMind Control Suite \cite{tunyasuvunakool2020dm_control} is a set of control tasks implemented in MuJoCo \cite{todorov2012mujoco}. 
We choose TD3 \cite{fujimoto2018addressing}, SAC \cite{haarnoja2018soft}, and Diffusion-QL \cite{wang2022diffusion} as three baselines for comparison. Below, we first provide a brief review of prior methods.

Policy gradient methods seek to directly optimize the policy by computing gradients of the expected reward with respect to the policy parameters \cite{suttonNIPS1999}. Deterministic policy gradient algorithms for MDPs (with discrete time and continuous action space) are developed in \cite{silver2014deterministic} (DPG) and later extended to incorporate deep neural networks in \cite{lillicrap2015continuous} (DDPG). Recent studies have focused on stochastic policies with entropy regularization, also known as the softmax method; see for example, \cite{mnih2016asynchronous} (A3C); \cite{schulman2017proximal} (PPO).
\begin{itemize}
    \item TD3: \cite{fujimoto2018addressing} proposed Twin Delayed Deep Deterministic Policy Gradient (TD3), which mitigates overestimation bias in DDPG by using clipped double Q-learning and delayed policy updates. This yields more stable and accurate policy learning in continuous control tasks.
     \item SAC: The Soft Actor-Critic algorithm \cite{haarnoja2018soft} optimizes a stochastic policy that maximizes both expected reward and policy entropy. The policy follows $\pi(a|x)\sim e^{\frac{1}{\lambda} Q(x,a)}$ and is reparameterized using a neural transformation of samples from a fixed distribution.
     \item Diffusion-QL: \cite{wang2022diffusion} integrates diffusion models with Q-learning by adding a term that maximizes action-values to the diffusion model’s training loss. The final policy-learning objective is a linear combination of policy regularization and policy improvement.
\end{itemize}

We evaluate CQSM against the baselines (TD3, SAC, and Diffusion-QL) on a range of MuJoCo continuous control tasks, from high-dimensional domains (Cheetah Run, Walker Walk, Walker Run, Humanoid Walk) to simpler environments (Cartpole Balance, Cartpole Swingup). Each experiment is repeated with 10 random seeds, and we report the average episode return across runs (removing extremely poor results).

As shown in panels (a)-(d) of Figure \ref{DMtasks}, CQSM matches or outperforms the TD3, SAC, and Diffusion-QL baselines, particularly in the early stages of training where it achieves higher rewards. This early performance gain highlights a key distinction between our approach and conventional actor-critic methods. Algorithms like SAC and TD3 may frequently sample actions with low Q-values in the initial stage because they do not utilize the action derivative $Q_a$, while Diffusion-QL can sometimes get stuck at suboptimal solutions. In contrast, our approach benefits from a more immediate policy adjustment via the learned score function, enabling the policy to better approximate the true optimal policy in the early stages. As a result, our method can achieve higher rewards in the early steps. In simpler tasks, shown in panels (e)-(f) of Figure \ref{DMtasks}, CQSM achieves performance comparable to the baselines, demonstrating both its stability and general applicability across task complexities.

\begin{figure}[htbp]
\label{DMtasks}
\centering
\subfigure[Cheetah Run]{
\includegraphics[width=0.45\textwidth]{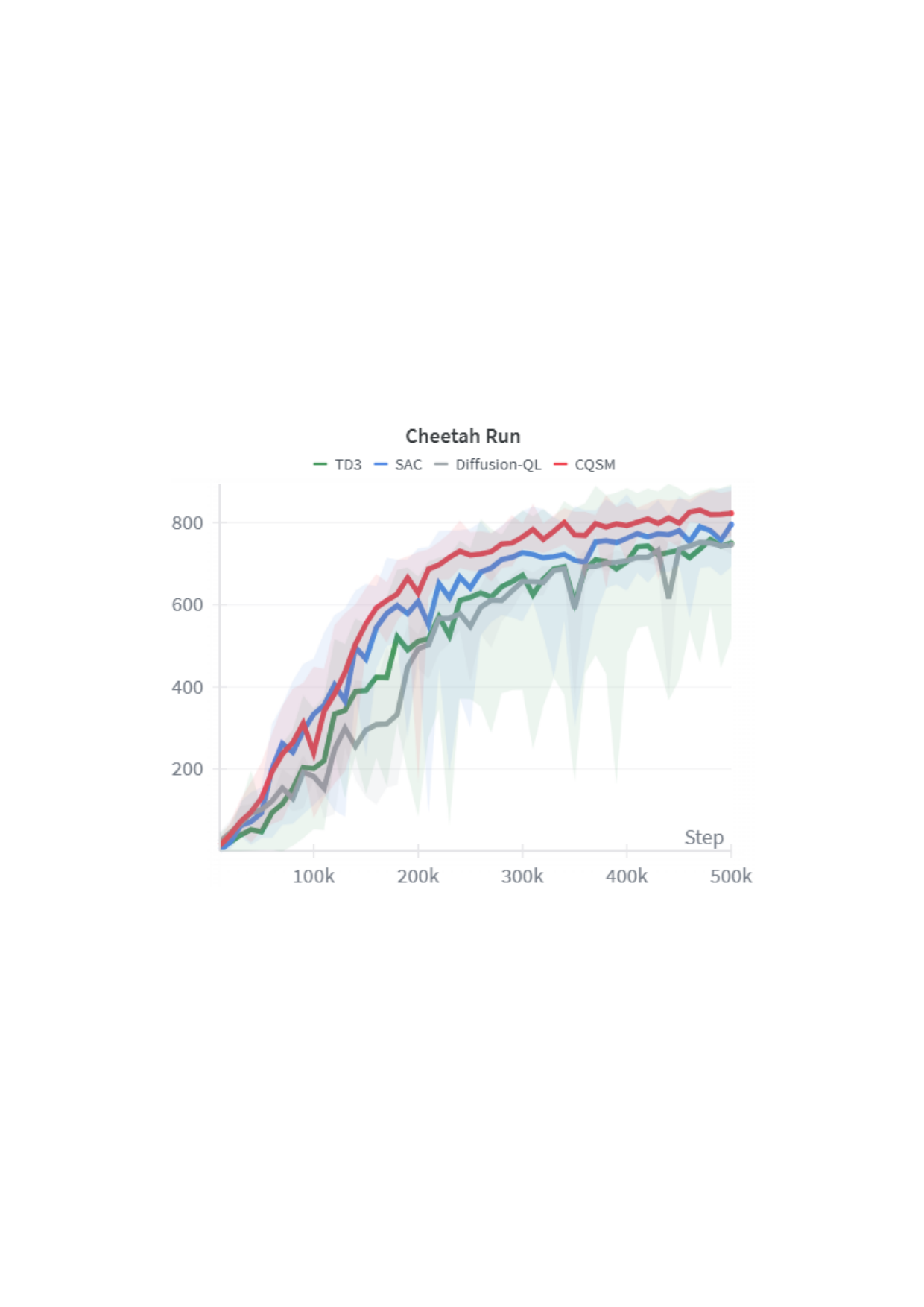}
}
\subfigure[Walker Walk]{
\includegraphics[width=0.45\textwidth]{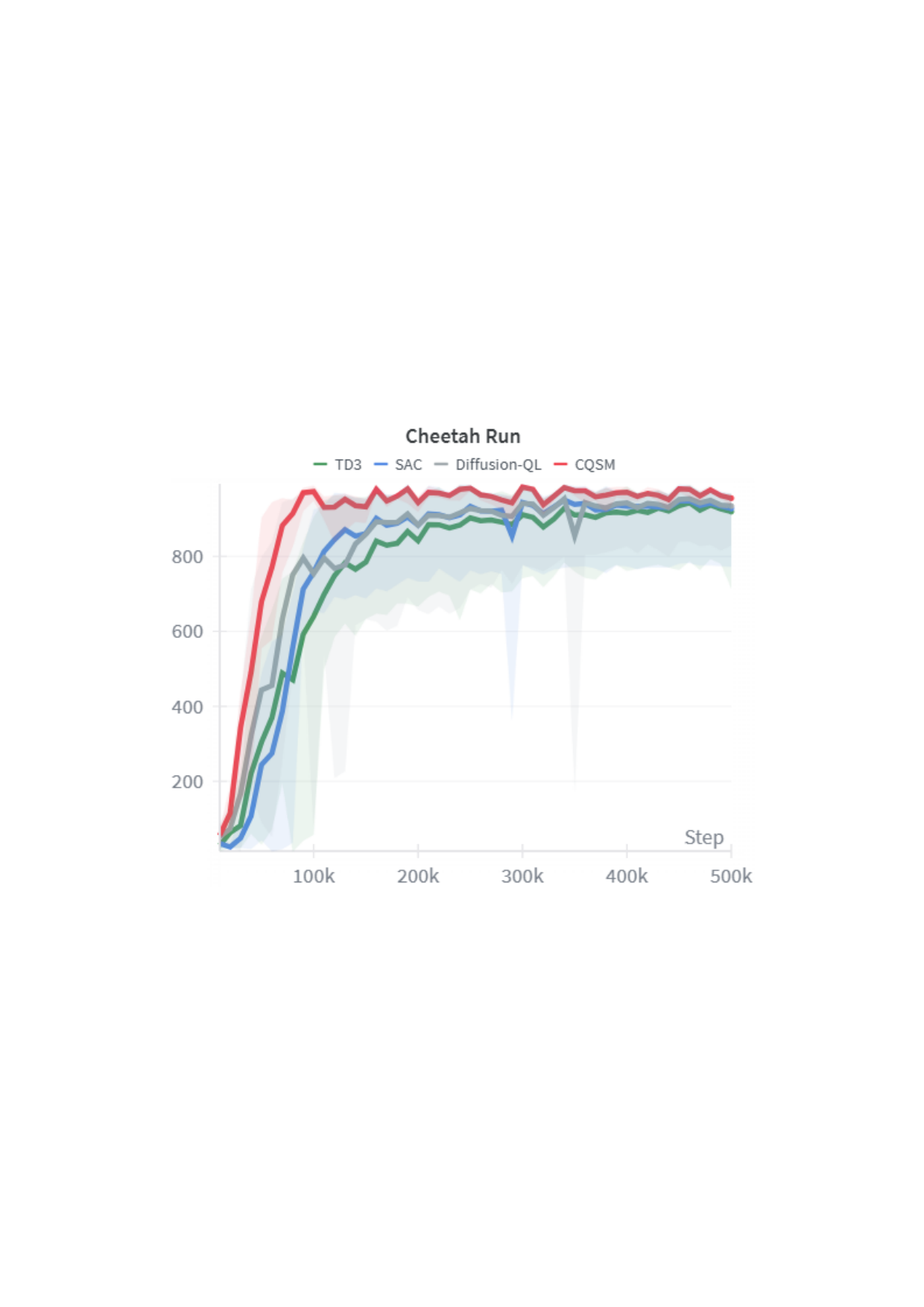}
}
\subfigure[Walker Run]{
\includegraphics[width=0.45\textwidth]{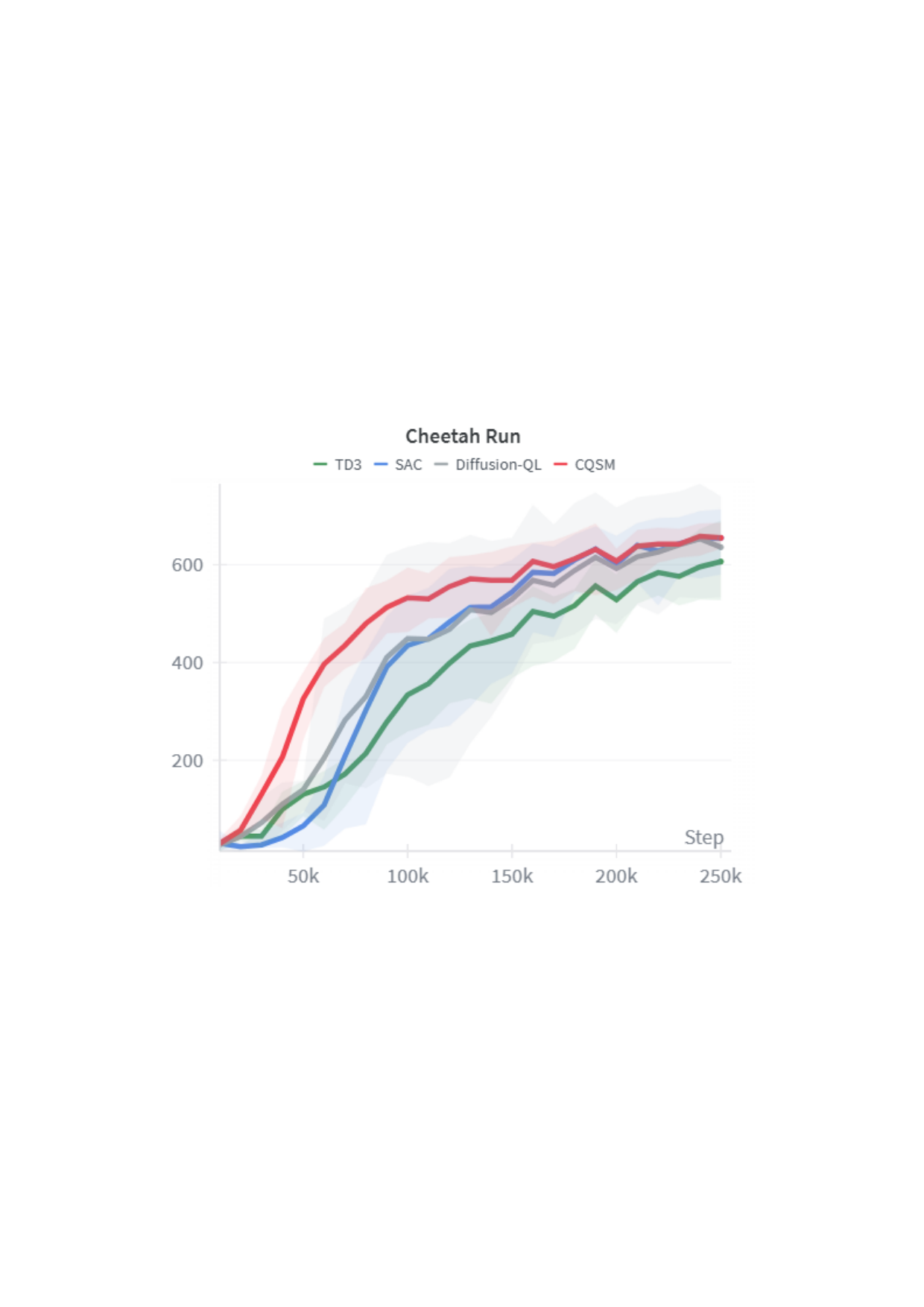}
}
\subfigure[Humanoid Walk]{
\includegraphics[width=0.45\textwidth]{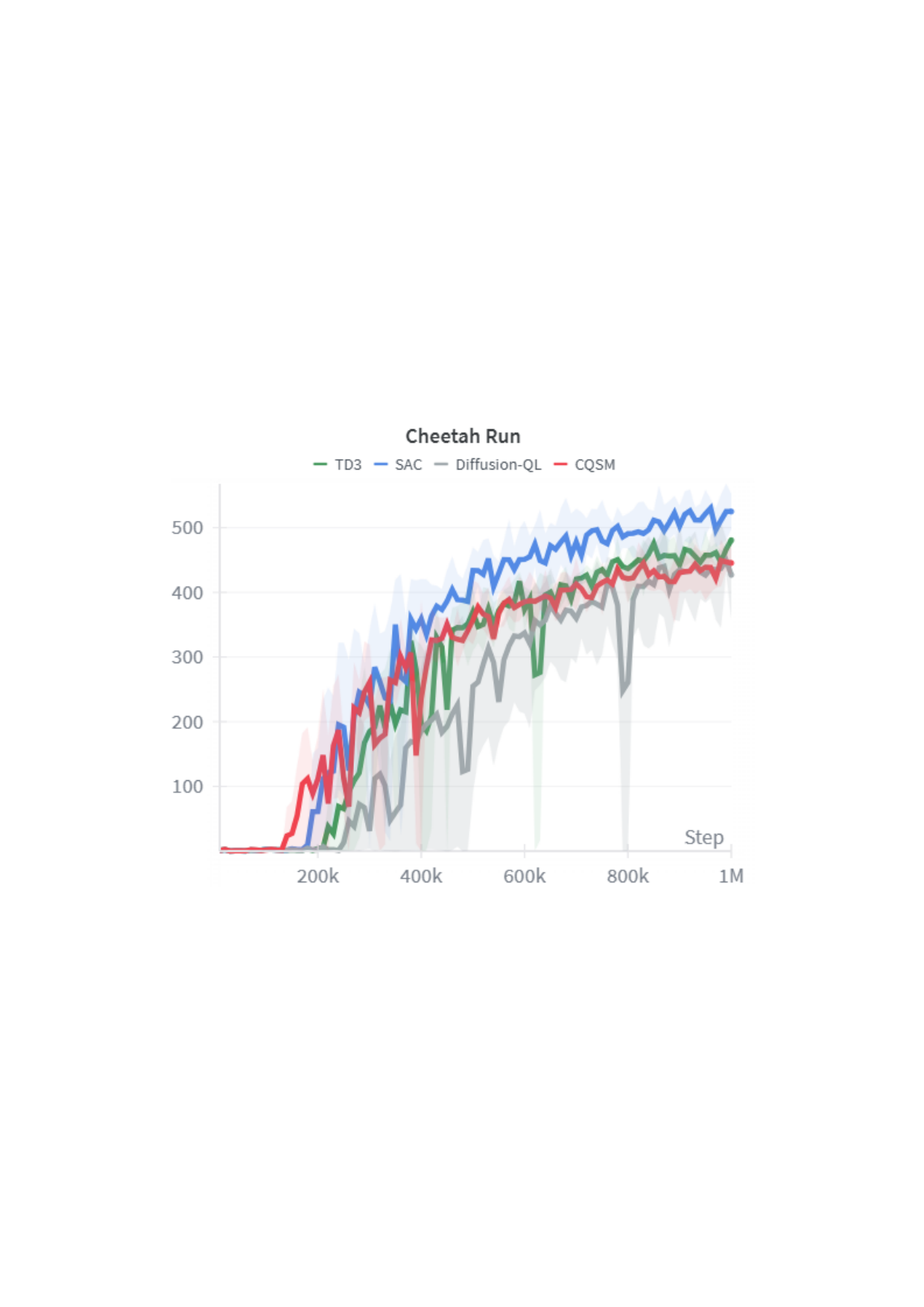}
}
\subfigure[Cartpole Balance]{
\includegraphics[width=0.45\textwidth]{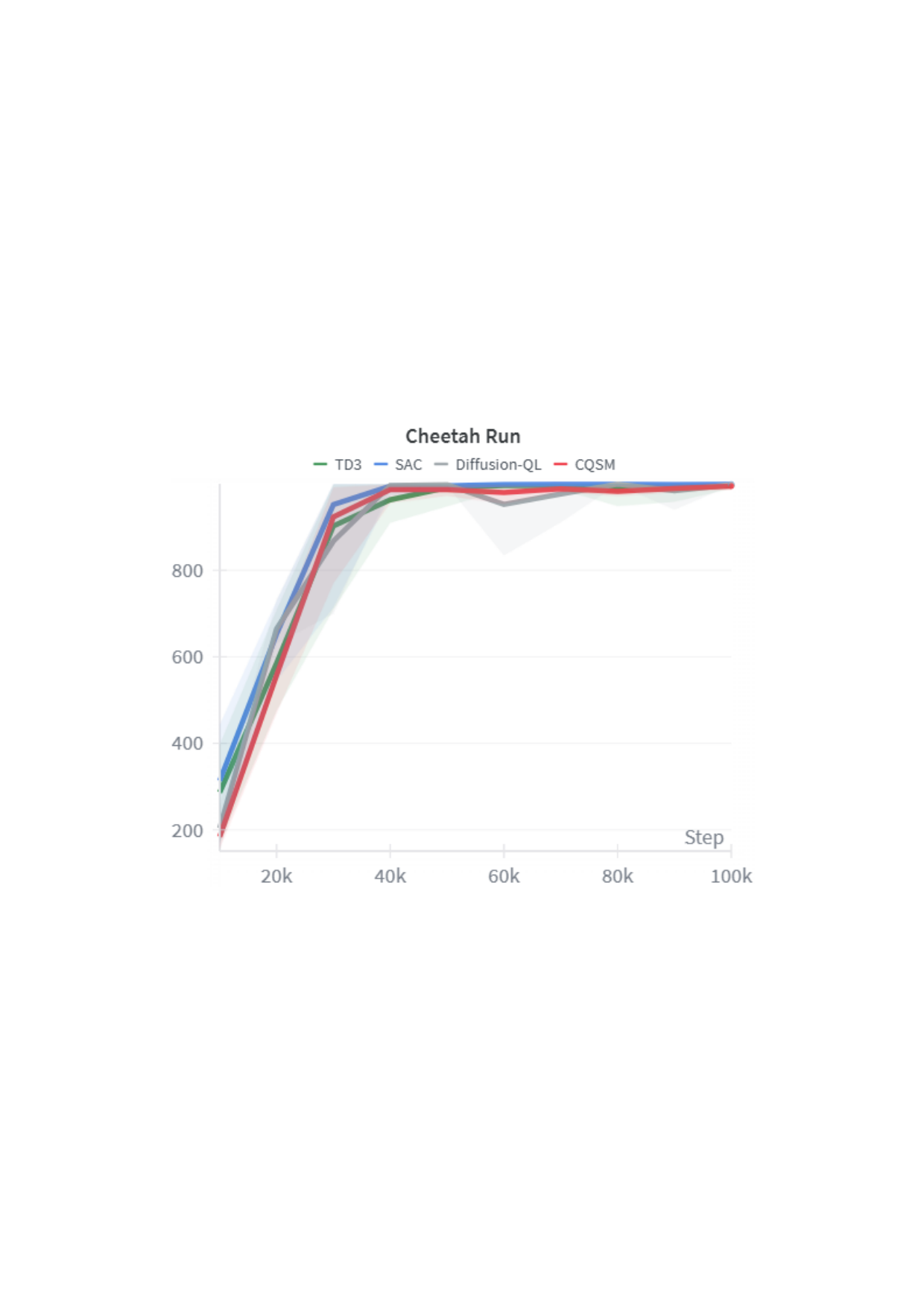}
}
\subfigure[Cartpole Swingup]{
\includegraphics[width=0.45\textwidth]{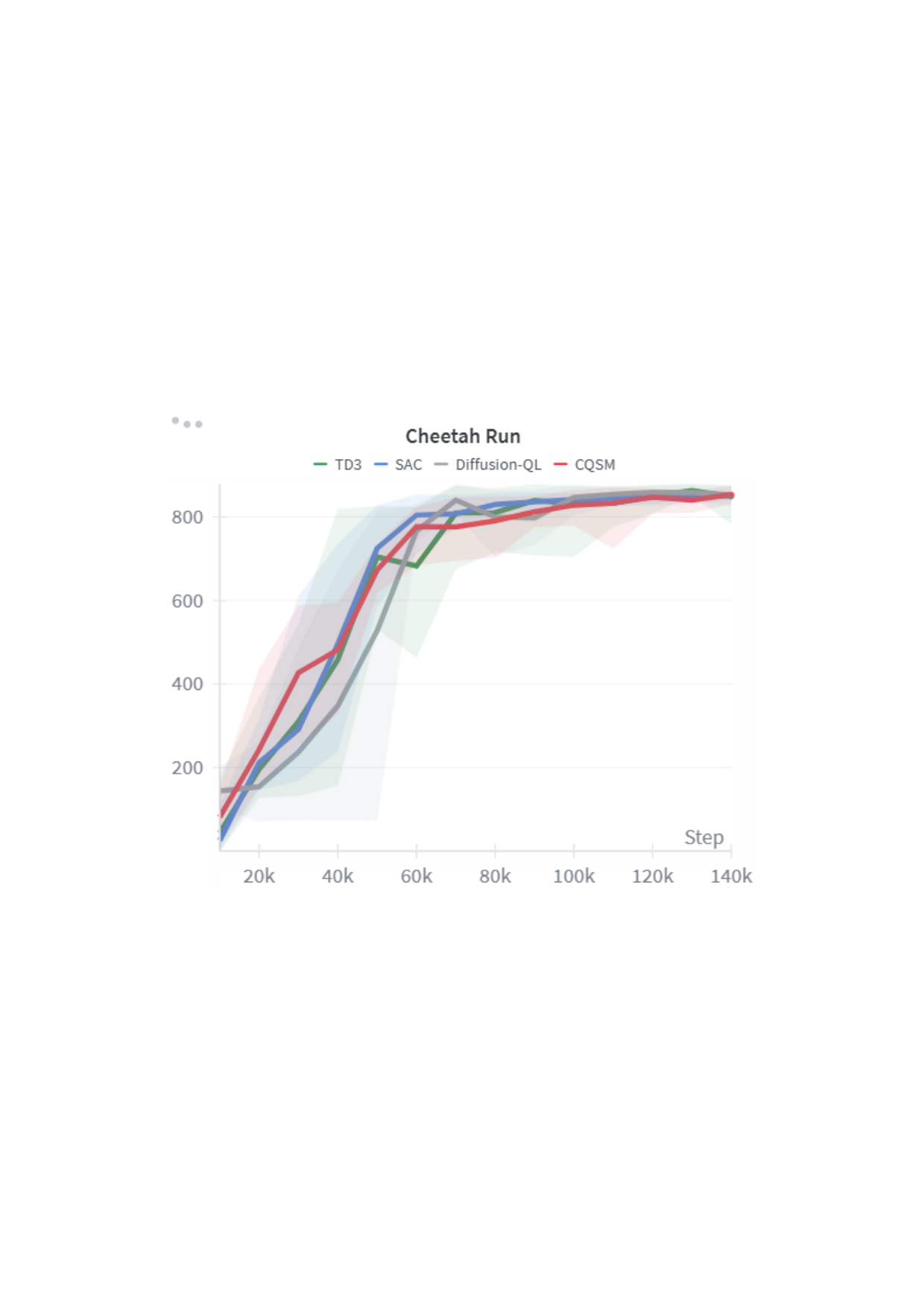}
}
\caption{Experimental Results Across A Suite of Six Continuous Control Tasks.} 
\end{figure}
\end{document}